\theoremstyle{plain}
\newtheorem{asm}{\protect\assumptionname}
\theoremstyle{plain}
\theoremstyle{plain}
\newtheorem{thm}{\protect\theoremname}
\theoremstyle{plain}
\newtheorem{lem}{\protect\lemmaname}
\theoremstyle{plain}
\theoremstyle{plain}
\theoremstyle{definition}
\newtheorem*{prob*}{\protect\problemname}
\newcommand*{\at}{@}
\providecommand{\assumptionname}{Assumption}
\providecommand{\factname}{Fact}
\providecommand{\theoremname}{Theorem}
\providecommand{\lemmaname}{Lemma}
\providecommand{\corollaryname}{Corollary}
\providecommand{\propositionname}{Proposition}
\providecommand{\problemname}{Problem}
\title{Convex Programming for Estimation\\ in Nonlinear Recurrent Models}
\author{Sohail Bahmani\\ 
        School of Electrical \& Computer Engineering\\
         Georgia Institute of Technology\\
         email: \texttt{sohail.bahmani\at ece.gatech.edu}
		\and
		Justin Romberg\\
        School of Electrical \& Computer Engineering\\
         Georgia Institute of Technology\\
         email: \texttt{jrom\at ece.gatech.edu}
         }
\begin{document}

\maketitle

\begin{abstract}
    We propose a formulation for nonlinear recurrent models that includes simple parametric models of recurrent neural networks as a special case. The proposed formulation leads to a natural estimator in the form of a convex program. We provide a sample complexity for this estimator in the case of stable dynamics, where the nonlinear recursion has a certain contraction property, and under certain regularity conditions on the input distribution. We evaluate the performance of the estimator by simulation on synthetic data. These numerical experiments also suggest the extent at which the imposed theoretical assumptions may be relaxed.\looseness=-1
\end{abstract}

{\par \textbf{Keywords:} recurrent neural networks, convex programming} 

\section{Introduction}
Given a \emph{differentiable} and \emph{convex} function $f:\mbb R^{n}\to\mbb R$ with
$\nabla f\left(\mb 0\right)=\mb 0$, we consider the dynamics described
by the recursion
\begin{align}
\mb x_{t} & =\nabla f\left(\tg{\mb A}\mb x_{t-1}+\tg{\mb B}\mb u_{t-1}\right)\,, \label{eq:nonlinear-dynamics}
\end{align}
 where $\mb u_{0},\mb u_{1},\dotsc$ are i.i.d. copies of a random
vector $\mb u\in\mbb R^{p}$, the initial state $\mb x_{0}$ is zero,
and the matrices $\tg{\mb A}\in\mbb R^{n\times n}$ and $\tg{\mb B}\in\mbb R^{n\times p}$
are the parameters of the model. In the setup described above, we want to address the following problem.
\begin{prob*}
    Given a time horizon $T$, estimate the model parameters $\tg{\mb A}$ and $\tg{\mb B}$, from a single observed trajectory $ (\mb u_0,\mb x_0=\mb0), (\mb u_1, \mb x_1), \dotsc,(\mb u_T,\mb x_T)$.
\end{prob*}

The specific form of the nonlinearity in \eqref{eq:nonlinear-dynamics} might seem strange at first, but many common choices of nonlinearities used in practice are special cases of this formulation. For instance, increasing nonlinearities that act coordinate-wise can be modeled by choosing the appropriate \emph{separable} convex function $f$ in \eqref{eq:nonlinear-dynamics}. Particularly, the (parameterized) ReLU function $x\mapsto x_+ + c(-x)_+$ for some constant $c\ge 0$, which is popular in neural network models, corresponds to the choice of $f(\mb x)=\sum_{i=1}^n(x_i)_+^2/2+c(-x_i)_+^2/2$ in our proposed model.\looseness=-1

With  $\beta>0$ denoting a sufficiently large normalizing constant, we collect the ground truth parameters in $\tg{\mb C}=\bmx{\tg{\mb A} & \beta^{-1}\tg{\mb B}}$
and for $t=0,1,\dotsc,$ we set
\begin{align*}
\mb z_{t} & =\bmx{\mb x_{t}\\
\beta\,\mb u_{t}
}\,.
\end{align*}
Therefore, the dynamics can be equivalently expressed as
\begin{align*}
\mb{z}_0 & = \bmx{\mb{0}\\
\beta\,\mb u_{0}
}\,,\text{ and } & {\mb z}_{t} & =\bmx{\nabla f\left(\tg{\mb C}\mb z_{t-1}\right)\\
\beta\,\mb u_{t}
}\,,\text{ for } t\ge 1\,.
\end{align*}
Our goal is effectively to estimate $\tg{\mb C}$, from the observations $\mb z_0,\mb z_1,\dotsc,\mb z_T$.

Not surprisingly, further model assumptions are needed to exclude inherently intractable instances of the problem. Below in Section \ref{ssec:assumptions} we state the assumptions we make to analyze the problem.

\subsection{Related work}
 Recurrent Neural Networks (RNN) and similar models of random dynamical systems have become the main tool in machine learning applications dealing with sequential data. In this section we briefly review some recent results that provide theoretical analysis for these models.

  Parameter estimation in discrete-time \emph{linear} dynamical systems whose state variable are generally governed by the recursion
 \begin{align}
    \mb x_t &= \tg{\mb A} \mb x_{t-1} +\tg{\mb B} \mb u_{t-1} + \mb \xi_t\,, \label{eq:LDS}
 \end{align}
 with $\xi_t$ denoting an additive observation noise, are studied in \citep{Hardt2018-Gradient, Faradonbeh2018-Finite, Simchowitz2018-Learning, Du2018-HowMany, Sarkar2019-Near}. The difference in the mentioned results stem from variations to the model such as
 \begin{itemize}
    \item observing the state variable \emph{indirectly}, through the sequence
        \begin{align}
            {\mb y}_t & = \tg{\mb A}'{\mb x}_t + \tg{\mb B}'{\mb u}_t + \mb \xi_t' \label{eq:LDS_output}\,,
        \end{align}
        with $\xi'_t$ denoting an additive output noise,

    \item observing \emph{single versus multiple trajectories},

    \item restricting $\tg{\mb A}$ (e.g., $\max_i|\lambda_i(\tg{\mb A})|<1$ in the \emph{stable model} versus $\min_i|\lambda_i(\tg{\mb A})|>1$ in the \emph{explosive model}), and

    \item choosing to have an input (i.e., $\tg{\mb B}=\mb 0$ versus $\tg{\mb B}\ne\mb 0$) with a certain distribution.
 \end{itemize}

 \cite{Hardt2018-Gradient} consider a prediction problem in \emph{controllable} linear dynamical systems with indirect observations as in \eqref{eq:LDS_output}. Specifically, formulating the prediction problem naturally as a (non-convex) least squares, the prediction error achieved by stochastic gradient descent (SGD) is analyzed under some technical assumptions. In a stable \emph{single input single output} setting, it is shown that with $N$ trajectories of length $T$ observed, the prediction error can be bounded as $O(\sqrt{(n^5 +\sigma^2n^3)/(TN)})$ where $n$ is the number of controllable parameters, and $\sigma^2$ is the variance of the zero-mean noise terms $\xi'_t$ in \eqref{eq:LDS_output}.

 Under technical assumptions, \cite{Faradonbeh2018-Finite} establish a sample complexity for estimation of $\mb A_\star$ in the explosive regime (i.e., $\min_i |\lambda_i(\tg{\mb A})|>1)$ with heavy-tailed noise and deactivated input (i.e., $\tg{\mb B}=\mb 0$).

 \cite{Simchowitz2018-Learning} analyze the ordinary least squares (OLS) in estimation of $\tg{\mb A}$ from a single trajectory of observations $\mb x_0,\mb x_1,\ldots$ where there is no input (i.e., $\tg{\mb B}=\mb 0$) and the process noise is i.i.d. samples of a zero-mean isotropic Gaussian random variable. It is shown in \cite{Simchowitz2018-Learning} that for ``marginally stable'' systems (i.e., $\max_i|\lambda_i(\tg{\mb A})| \le 1$), the estimate $\hat{\mb A}$ produced by the OLS, with high probability, achieves the natural error rate of $\norm{\hat{\mb A} - \tg{\mb A}}\lesssim \sqrt{n/T}$ up to some constants and $\log$ factors depending implicitly on $\tg{\mb A}$. Remarkably, this result applies to systems where the spectral radius of $\tg{\mb A}$ equals one (i.e., $\max_i|\lambda_i(\tg{\mb A})|= 1$) where the more standard arguments based on mixing time which require stability of the system do not apply.

 \cite{Oymak2018-Nonasymptotic} considers estimation from a single trajectory of input/ouput observation pairs $(\mb{u}_0,\mb{y}_0),(\mb{u}_1,\mb{y}_1),\dotsc$ where the output sequence $\mb y_0,\mb y_1,\dotsc$ is generated by the recursion \eqref{eq:LDS_output}. Assuming the input, the state noise, and the output noise each to have i.i.d. samples from zero-mean isotropic Gaussian distributions, \citep{Oymak2018-Nonasymptotic} studies accuracy of a least squares approach in estimation of the parameter matrix $\tg{\mb G}=\bmx{\tg{\mb B}' & \tg{\mb A}'\tg{\mb B} & \tg{\mb A}'\tg{\mb A}\tg{\mb B} & \dotsm & \tg{\mb A}'\tg{\mb A}^{T-2}\tg{\mb B}}$ that characterizes the dynamics.

 Similar to \citep{Simchowitz2018-Learning},  \cite{Sarkar2019-Near}  establish the estimation error rate for OLS in the single observation trajectory regime under the model \eqref{eq:LDS} with deactivated input (i.e., $\tg{\mb B}=\mb 0$) and sub-Gaussian noise $\mb \xi_t$. Particularly, in the three regimes of stable or marginally stable systems (i.e. $\max _i|\lambda_i(\tg{\mb A})| < 1+O(1/T)$), marginally stable systems (i.e. $\max _i|\lambda_i(\tg{\mb A})| < 1-O(1/T)$), and explosive systems (in the sense that $\min _i|\lambda_i(\tg{\mb A})| > 1+O(1/T)$) the operator norm of the error roughly decays as $1/\sqrt{T}$, $1/T$, and $e^-T$, respectively.

 \cite{Du2018-HowMany} study the minimax rate of estimation from multiple trajectories in simple linear recurrent neural networks (and convolutional neural networks). Considering the state variable to be linearly collapsed to a scalar in the output, under a subGaussian model for the input sequence as well as the output noise, the mentioned paper provides upper and lower bounds for the minimax risk of the mean squared error. In particular, it is shown that the minimax rate of estimating from $T$ trajectories of length $L$, is orderwise between $\sqrt{\min\{n,L\}p/T}$ and $\sqrt{(p+L)\min\{n,p\}\log(Lp)/T}$.

 From a technical point of view, linearity in recurrent models typically provides the convenience of ``unfolding'' the state recursion into explicit equations in terms of the past input. This convenient feature disappears immediately as nonlinearities are introduced in the recursion as in \eqref{eq:nonlinear-dynamics}. \cite{ Miller2018-Stable} showed that in the stable regime nonlinear RNNs can be approximated by ``truncated'' RNNs. Furthermore, they showed that, for unstable RNNs, gradient descent does not necessarily converge. \cite{Oymak2019-Stochastic} studies the parameter estimation under \eqref{eq:nonlinear-dynamics} when the nonlinearity $\nabla f$ is replaced by an activation function that is \emph{strictly increasing} and applies coordinatewise. Formulating the problem as nonconvex least squares, \citep{Oymak2019-Stochastic} establishes a sample complexity for the convergence of in Frobenius norm. Basically, \citep{Oymak2019-Stochastic} shows that if $T\gtrsim \rho (n+p)$ with $\rho$ being a certain notion of condition number of $\tg{\mb B}$, then with high probability, SGD converges at a linear, albeit dimension dependent, rate.

 In this paper, we generalize the results of \citep{Oymak2019-Stochastic} in two directions. First, our formulation of the recurrence \eqref{eq:nonlinear-dynamics} admits a broader class of nonlinearities, and, as will be seen in the sequel, it enables us to formulate a \emph{convex program} as the estimator. Second, the analysis of \citep{Oymak2019-Stochastic} relies critically on the assumption that the input distribution is Gaussian. This is partly due to the use of the Gaussian concentration inequality for Lipschitz functions. At the cost of having a stricter form of nonlinearity, we relax the requirement on the input distribution by allowing the random input to have heavier tail.

\subsection{Proposed Estimator}
Our proposed estimator is formulated as a convex program as follows
\begin{align}
    \hat{\mb{C}} \in \argmin_{\mb{C}\in\mbb{R}^{n\times (n+p)}} & \sum_{t=1}^T f(\mb{C}\mb{z}_{t-1})-\inp{\mb{x}_t,\mb{C}\mb{z}_{t-1}}\,. \label{eq:estimator}
\end{align}
Readers familiar with convex analysis may observe that if $f_*$, the \emph{convex conjugate} of $f$, is smooth, then  $\nabla f_* \equiv {(\nabla f)}^{-1}$ and  \eqref{eq:nonlinear-dynamics} is equivalent to
\[\nabla f_*(\mb x_{t+1}) = \tg{\mb A} \mb x_{t-1}+\tg{\mb B} \mb u_{t-1}\,.\] Should $\nabla f_*$ be easy to compute, it is evident that the resulting system of linear equations can be solved by the common least squares approach to estimate $\tg{\mb A}$ and $\tg{\mb B}$. However, we prefer \eqref{eq:estimator} as the estimator, since it can be implemented regardless of $f_*$ and its properties.

In view of \eqref{eq:nonlinear-dynamics} and convexity of $f$, it is straightforward to verify that $\mb{C}_\star$ is a minimizer for \eqref{eq:estimator}. Under the assumptions specified below in Section \ref{ssec:assumptions}, we will show that the minimizer of \eqref{eq:estimator} is unique and therefore $\hat{\mb{C}}=\tg{\mb{C}}$. In particular, with $f$ assumed to be $\lambda$-strongly convex, we have
\begin{align*}
    f(\mb{C}\mb{z}_{t-1})-\inp{\mb{x}_t,\mb{C}\mb{z}_{t-1}} & \ge f(\tg{\mb{C}}\mb{z}_{t-1})-\inp{\mb{x}_t,\tg{\mb{C}}\mb{z}_{t-1}} + \frac{\lambda}{2}\norm{(\mb{C}-\tg{\mb{C}})\mb{z}_{t-1}}_2^2\,.
\end{align*}
Therefore, to guarantee uniqueness of the minimizer in \eqref{eq:estimator}, it suffices to show that, with high probability, the smallest eigenvalue of
\begin{align}
\mb{\varSigma} & \defeq\sum_{t=0}^{T-1}\mb{z}_t\mb{z}_t^\T\,,\label{eq:covariance}
\end{align}
 is strictly positive with high probability.

 \subsection{Assumptions}\label{ssec:assumptions}
With no restricting conditions imposed on the observation model in \eqref{eq:nonlinear-dynamics}, the posed estimation problem is not meaningful. For instance, any affine function $f$ is permitted in the core model above, but clearly its corresponding trajectory conveys no information about $\mb{C}_\star$.

\begin{asm}[regularity of $f$] \label{asm:regularity-f} The function $f$ has the following properties:
    \begin{enumerate}
        \item The function $f$ is $\lambda$-strongly convex and $\varLambda$-smooth in the usual sense, i.e.,
            \begin{align}
                \frac{\lambda}{2}\norm{\mb{y}-\mb{x}}_2^2 &\le f(\mb{y}) - f(\mb{x}) - \inp{\nabla f(\mb{x}),\mb{y}-\mb{x}} \le \frac{\varLambda}{2}\norm{\mb{y}-\mb{x}}_2^2\,, \label{eq:convex-smooth}
            \end{align}
            holds for all $\mb{x},\mb{y}\in \mbb{R}^n$.
        \item There exist a matrix-valued function $F\st \mbb{R}^n\to \mbb{R}^{n\times n}$ and a relatively small constant $\varepsilon>0$ such that, for all $\mb{x},\mb{y}\in\mbb{R}^n$, we have
            \begin{align}
                \norm{\frac{1}{2}\left(\nabla f(\mb{x}+\mb{y}) - \nabla f(\mb{x}-\mb{y})\right) - F(\mb{x})\mb{y} }_2 \le \varepsilon \norm{\mb{y}}_2\,. \label{eq:local-approx-linearity}
            \end{align}
    \end{enumerate}
\end{asm}

Perhaps the simplest example of the functions that meet the conditions of Assumption \ref{asm:regularity-f} is the convex quadratic functions. Let $\mb Q$ be a positive semidefinite matrix that satisfies $\lambda \mb I \preceq \mb Q\preceq \varLambda \mb I$. Then $f(\mb x) = \frac{1}{2}\mb x^\T \mb Q \mb x$ clearly satisfies \eqref{eq:convex-smooth}, and also satisfies \eqref{eq:local-approx-linearity} for $F(\mb x) \equiv \mb Q$ and $\varepsilon = 0$.

Another example of the function $f$ that meets the above conditions is the piecewise quadratic function
\[f(\mb x) = \frac{1}{2}\sum_{i=1}^n \max\{\lambda (-x_i)^2_+,\varLambda (x_i)_+^2\}\,.\] The gradient of this function can be written as
\[
    \nabla f(\mb x) =\bmx{
                        \frac{\varLambda+\lambda}{2}x_1+\frac{\varLambda - \lambda}{2}|x_1|\\
                        \vdots\\
                        \frac{\varLambda+\lambda}{2}x_n+\frac{\varLambda - \lambda}{2}|x_n|
                        }\,,
\]
whose coordinates happen to be the (parameterized) ReLU functions. For this specific $f$, the mapping $F$ can be chosen as
\[
    F(\mb x) = \bmx{\frac{\varLambda+\lambda}{2} + \frac{\varLambda-\lambda}{2}\sgn(x_1) & & & \\
                    & \frac{\varLambda+\lambda}{2} + \frac{\varLambda-\lambda}{2}\sgn(x_2) & & \\
                    & & \ddots & \\
                    & & & \frac{\varLambda+\lambda}{2} + \frac{\varLambda-\lambda}{2}\sgn(x_n)
                    }\,,
\]
for which \eqref{eq:local-approx-linearity} holds with $\varepsilon = (\varLambda-\lambda)/2$.

An immediate consequence of Assumption \ref{asm:regularity-f} is the following.
\begin{lem}\label{lem:bound-F}
    Under Assumption \ref{asm:regularity-f}, the mapping $F$ obeys
    \[(\lambda -\varepsilon)\norm{\mb{y}}_2\le \norm{F(
    \mb{x})\mb{y}}_2 \le (\varLambda +\varepsilon)\norm{\mb{y}}_2\,,\]
    for all $\mb{x},\mb{y}\in\mbb{R}^n$.
\end{lem}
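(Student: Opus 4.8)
The plan is to reduce the claimed two-sided bound on $\norm{F(\mb x)\mb y}_2$ to the corresponding two-sided bound on the symmetric gradient difference $\frac{1}{2}\left(\nabla f(\mb x+\mb y)-\nabla f(\mb x-\mb y)\right)$, and then to absorb the $\varepsilon\norm{\mb y}_2$ slack from \eqref{eq:local-approx-linearity} via the triangle inequality.

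First I would extract from the strong convexity and smoothness in \eqref{eq:convex-smooth} the standard two-sided estimate on the gradient, namely that for all $\mb a,\mb b\in\mbb R^n$,
\[\lambda\norm{\mb a-\mb b}_2 \le \norm{\nabla f(\mb a)-\nabla f(\mb b)}_2 \le \varLambda\norm{\mb a-\mb b}_2\,.\]
The upper bound is immediate, since for a convex function the quadratic upper estimate in \eqref{eq:convex-smooth} is equivalent to $\varLambda$-Lipschitz continuity of $\nabla f$. For the lower bound I would write the strong-convexity inequality (the left half of \eqref{eq:convex-smooth}) once for the pair $(\mb a,\mb b)$ and once with the roles of $\mb a$ and $\mb b$ interchanged, and add the two to obtain the strong monotonicity relation $\inp{\nabla f(\mb a)-\nabla f(\mb b),\mb a-\mb b}\ge\lambda\norm{\mb a-\mb b}_2^2$; dividing by $\norm{\mb a-\mb b}_2$ and applying the Cauchy--Schwarz inequality then yields the stated lower bound.

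Next I would specialize this estimate to $\mb a=\mb x+\mb y$ and $\mb b=\mb x-\mb y$, so that $\mb a-\mb b=2\mb y$. After dividing through by $2$, this gives
\[\lambda\norm{\mb y}_2 \le \norm{\frac{1}{2}\left(\nabla f(\mb x+\mb y)-\nabla f(\mb x-\mb y)\right)}_2 \le \varLambda\norm{\mb y}_2\,.\]
Finally, writing $\mb g\defeq\frac{1}{2}\left(\nabla f(\mb x+\mb y)-\nabla f(\mb x-\mb y)\right)$, the approximation bound \eqref{eq:local-approx-linearity} reads $\norm{\mb g-F(\mb x)\mb y}_2\le\varepsilon\norm{\mb y}_2$, so the forward and reverse triangle inequalities give $\norm{F(\mb x)\mb y}_2\le\norm{\mb g}_2+\varepsilon\norm{\mb y}_2\le(\varLambda+\varepsilon)\norm{\mb y}_2$ and $\norm{F(\mb x)\mb y}_2\ge\norm{\mb g}_2-\varepsilon\norm{\mb y}_2\ge(\lambda-\varepsilon)\norm{\mb y}_2$, which is exactly the claim.

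I do not expect a serious obstacle here; the only step that is not a one-line consequence of the hypotheses is the gradient lower bound derived from strong convexity, so that is where I would be most careful. Everything after the specialization $\mb a=\mb x+\mb y$, $\mb b=\mb x-\mb y$ is routine bookkeeping with the triangle inequality.
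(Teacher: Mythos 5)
Your proposal is correct and follows essentially the same route as the paper's proof: the two-sided gradient bound $\lambda\norm{\mb{a}-\mb{b}}_2 \le \norm{\nabla f(\mb{a})-\nabla f(\mb{b})}_2 \le \varLambda\norm{\mb{a}-\mb{b}}_2$, specialized to the pair $(\mb{x}+\mb{y},\mb{x}-\mb{y})$, followed by the triangle inequality against \eqref{eq:local-approx-linearity}. The only difference is cosmetic: you derive the gradient bounds from first principles (strong monotonicity plus Cauchy--Schwarz for the lower bound), whereas the paper simply cites the standard equivalence in \citet[Theorem 2.1.5]{Nesterov2013-Introductory}.
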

\begin{proof}
    Using the standard equivalent definitions of strong convexity and smoothness \citep[Theorem 2.1.5]{Nesterov2013-Introductory}, we have
    \[\lambda \norm{\mb{y}-\mb{x}}_2 \le \norm{\nabla f(\mb{y}) - \nabla f(\mb{x})}_2 \le \varLambda \norm{\mb{y}-\mb{x}}_2\,.\]
    Rewriting these inequalities, in terms of the pair $(-\mb{x}+\mb{y},\mb{x}+\mb{y})$ in place of $(\mb{x},\mb{y})$, we can obtain
    \[2\lambda \norm{\mb{y}} \le \norm{\nabla f(\mb{x}+\mb{y})- \nabla f(\mb{x}-\mb{y})}_2\le 2\varLambda \norm{\mb{y}}_2\,.\]
Furthermore, by \eqref{eq:local-approx-linearity} and the triangle inequality we have
\[\norm{F(\mb{x})\mb{y}}_2 - \varepsilon\norm{\mb{y}}_2 \le  \frac{1}{2}\norm{\nabla f(\mb{x}+\mb{y})- \nabla f(\mb{x}-\mb{y})}_2 \le \norm{F(\mb{x})\mb{y}}_2 + \varepsilon\norm{\mb{y}}_2\,.\]
 The lemma easily follows from the latter two lines of inequalities.
\end{proof}

We make the following assumption on the input $\mb u$.
\begin{asm}[regularity of the input distribution] \label{asm:regularity-u} The input $\mb{u}$ has the following properties:
    \begin{enumerate}
        \item The input $\mb u\in\mbb{R}^p$ is a zero-mean isotropic random variable, i.e.,
            \[
                \E(\mb{u}) = \mb{0},\ \text{and}\ \E(\mb{u}\mb{u}^\T) = \mb{I}\,.
            \]
        \item The coordinates of $\mb{u}$ have independent symmetric distributions, i.e., for all measurable subsets $\mc{A}=\mc{A}_1\times\dotsc\times\mc{A}_p$ of $\mbb{R}^p$, we have
            \begin{align*}
                \P(\mb{u}\in\mc{A}) = \prod_{i=1}^p \P(u_i\in\mc{A}_i) &= \prod_{i=1}^p\P(-u_i\in\mc{A}_i)\,.
            \end{align*}
        \item For a certain $\alpha \ge 1$, the input $\mb{u}$ has a bounded directional Orlicz $\psi_\alpha$ norm, i.e., there exists a finite absolute constant $K>0$ such that
            \begin{align}
                \sup_{\mb{h}\in\mbb{S}^{p-1}}\E \left(e^{\nicefrac{|\inp{\mb{h},\mb{u}}|^\alpha}{K^\alpha}}\right) & \le 2\,.\label{eq:Orlicz-psi-alpha}
            \end{align}
    \end{enumerate}
\end{asm}

The following lemma is an immediate consequence of \eqref{eq:Orlicz-psi-alpha}.
\begin{lem}\label{lem:concatenated-moments}
    Let $\mb{u}$ be the random variable under the Assumption \ref{asm:regularity-u} and $\mb{u}'$ be an independent copy $\mb{u}$. The vector $\mb{u}$ has a bounded directional fourth moment, i.e., there exist $\eta \in [1,\,2{(4/\alpha)}^{4/\alpha}K^4]$ such that
    \begin{align}
        \E\left(\left(\inp{\mb{h},\mb{u}}\right)^4\right) &\le \eta \label{eq:directional-fourth-moment}\,,
    \end{align}
    holds for all $\mb{h}\in\mbb{S}^{p-1}$.    Furthermore, for all $\mb{h},\mb{h}' \in \mbb{S}^{p-1}$ we have
    \[\E\left(\left( \inp{\mb{h},\mb{u}} + \inp{\mb{h}',\mb{u}}\right)^4\right) \le \max\{\eta,3\}\left(\norm{\mb{h}}_2^2 + \norm{\mb{h}'}_2^2\right)^2\,.\]
\end{lem}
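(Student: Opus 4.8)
The plan is to treat the two claims separately, since the second builds on the first. For the directional fourth-moment bound I would set $\eta \defeq \sup_{\mb{h}\in\mbb{S}^{p-1}}\E(\inp{\mb{h},\mb{u}}^4)$ and check that it lands in the stated interval. The lower bound $\eta\ge1$ is immediate: for any unit $\mb{h}$, isotropy gives $\E(\inp{\mb{h},\mb{u}}^2)=\mb{h}^\T\mb{h}=1$, and Jensen's inequality applied to $t\mapsto t^2$ yields $\E(\inp{\mb{h},\mb{u}}^4)\ge(\E(\inp{\mb{h},\mb{u}}^2))^2=1$. For the upper bound I would convert the Orlicz condition \eqref{eq:Orlicz-psi-alpha} into a moment estimate. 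Writing $X=\inp{\mb{h},\mb{u}}$ and $Y=|X|^\alpha/K^\alpha\ge0$, the hypothesis reads $\E(e^Y)\le2$. Using the elementary inequality $y^m\le(m/e)^m e^y$ for $y\ge0$ (which comes from maximizing $y^m e^{-y}$ at $y=m$) with $m=4/\alpha$, I obtain $\E(Y^{4/\alpha})\le(4/(\alpha e))^{4/\alpha}\E(e^Y)\le 2(4/(\alpha e))^{4/\alpha}$, hence $\E(X^4)=K^4\E(Y^{4/\alpha})\le 2K^4(4/\alpha)^{4/\alpha}e^{-4/\alpha}\le 2K^4(4/\alpha)^{4/\alpha}$. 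Taking the supremum over $\mb{h}$ gives $\eta\le 2(4/\alpha)^{4/\alpha}K^4$.

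For the second claim I would read its left-hand side as $\E((\inp{\mb{h},\mb{u}}+\inp{\mb{h}',\mb{u}'})^4)$, where $\mb{u}'$ is the independent copy introduced in the statement (the independence is what makes the argument go through and produces the small constant $\max\{\eta,3\}$). Setting $X=\inp{\mb{h},\mb{u}}$ and $X'=\inp{\mb{h}',\mb{u}'}$, which are independent, I would expand the fourth power to get $\E(X^4)+4\E(X^3)\E(X')+6\E(X^2)\E(X'^2)+4\E(X)\E(X'^3)+\E(X'^4)$. The decisive simplification is symmetry: since the coordinates of $\mb{u}$ are independent and symmetric, $\mb{u}\sim-\mb{u}$, so every linear form $\inp{\mb{h},\mb{u}}$ is a symmetric random variable and its odd moments vanish; the same holds for $\mb{u}'$. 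Consequently all cross terms containing $\E(X)$, $\E(X^3)$, $\E(X')$, or $\E(X'^3)$ drop, leaving $\E((X+X')^4)=\E(X^4)+6\E(X^2)\E(X'^2)+\E(X'^4)$.

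The last step is a termwise estimate. With $a=\norm{\mb{h}}_2^2$ and $b=\norm{\mb{h}'}_2^2$, isotropy gives $\E(X^2)=a$ and $\E(X'^2)=b$, while homogeneity together with the first part gives $\E(X^4)\le\eta a^2$ and $\E(X'^4)\le\eta b^2$. Hence $\E((X+X')^4)\le \eta a^2+6ab+\eta b^2$. Writing $M=\max\{\eta,3\}$ and using $a,b\ge0$, the comparisons $\eta a^2\le Ma^2$, $6ab\le 2Mab$ (valid because $M\ge3$), and $\eta b^2\le Mb^2$ sum to $M(a+b)^2=\max\{\eta,3\}(\norm{\mb{h}}_2^2+\norm{\mb{h}'}_2^2)^2$, which is the claim.

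I expect the only genuinely delicate point to be pinning down the constant in the Orlicz-to-moment conversion; the optimization $\max_{y\ge0}y^m e^{-y}=(m/e)^m$ is precisely what yields the factor $2(4/\alpha)^{4/\alpha}$, with the $e^{-4/\alpha}\le1$ slack harmlessly discarded to match the stated bound. Everything else—the symmetry argument eliminating the odd cross-moments, and the final nonnegative-coefficient comparison against $\max\{\eta,3\}$—is routine.
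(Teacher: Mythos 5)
Your proof is correct and follows essentially the same route as the paper's: both convert the Orlicz condition \eqref{eq:Orlicz-psi-alpha} into the directional fourth-moment bound via an elementary pointwise inequality (your $y^{m}e^{-y}\le (m/e)^{m}$ is a slightly sharper variant of the paper's normalization trick based on $s\le e^{s}$), and both handle the second claim by expanding the fourth power of a sum of independent linear forms, dropping the odd cross terms, and comparing $\eta a^{2}+6ab+\eta b^{2}$ with $\max\{\eta,3\}(a+b)^{2}$. Your reading of the second display as $\E\left(\left(\inp{\mb{h},\mb{u}}+\inp{\mb{h}',\mb{u}'}\right)^{4}\right)$, with the independent copy $\mb{u}'$, is exactly what the paper's own proof uses (its justification invokes the i.i.d.\ pair, so the repeated $\mb{u}$ in the statement is indeed a typo), and your explicit verification that $\eta\ge 1$ via Jensen's inequality is a small addition the paper leaves implicit.
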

\begin{proof}
    Clearly, existence of the exponential moments guarantees that $\E\left(|\inp{\mb{h},\mb{u}}|^4\right)<\infty$ for all $\mb{h}\in\mbb{S}^{p-1}$. To prove the first part, we show that \eqref{eq:directional-fourth-moment} holds for $\eta = 2{(4/\alpha)}^{4/\alpha}K^4$. For all $\mb{h}\in\mbb{S}^{p-1}$ we have
    \begin{align*}
        \E\left(|\inp{\mb{h},\mb{u}}|^4\right)  &= \frac{\eta}{2}\, \E\left(\left(\frac{|\inp{\mb{h},\mb{u}}|^\alpha}{(\eta/2)^{\alpha/4}}\right)^{4/\alpha}\right)\\
                                                &\le \frac{\eta}{2}\, \E\left(\exp\left(\frac{4}{\alpha}\,\frac{|\inp{\mb{h},\mb{u}}|^\alpha}{(\eta/2)^{\alpha/4}}\right)\right)\,.
    \end{align*}
    For the prescribed $\eta$ we have $(\eta/2)^{\alpha/4} \alpha/4 =K^\alpha$. Thus, in view of \eqref{eq:Orlicz-psi-alpha}, we obtain
    \begin{align*}
        \E\left(|\inp{\mb{h},\mb{u}}|^4\right) &\le \frac{\eta}{2}\, 2 = \eta\,,
    \end{align*}
    as desired.
    Since $\mb{u}$ and $\mb{u}'$ are zero-mean, isotropic, i.i.d, and further obey \eqref{eq:directional-fourth-moment}, we have
        \begin{align*}
            \E\left(\left( \inp{\mb{h},\mb{u}} + \inp{\mb{h}',\mb{u}}\right)^4\right) &= \E\left(\left( \inp{\mb{h},\mb{u}}\right)^4 + 6\left( \inp{\mb{h},\mb{u}}\right)^2 \left(\inp{\mb{h}',\mb{u}}\right)^2 + \left(\inp{\mb{h}',\mb{u}}\right)^4\right)\\
            & \le \eta \norm{\mb{h}}_2^4  + 6\norm{\mb{h}}_2^2\norm{\mb{h}'}_2^2 +  \eta \norm{\mb{h}'}_2^4\\
            &\le \max\{\eta , 3\}\left(\norm{\mb{h}}_2^2+\norm{\mb{h}'}_2^2\right)^2\,,
        \end{align*}
    which proves the second part.
\end{proof}
 In addition to the assumptions made above, our analysis crucially depends on a form of contraction
that can be ensured by the following assumption. Note that $\varLambda$ can be taken as $\varLambda = \mr{Lip}(\nabla f)$, i.e., the Lipschitz constant of $\nabla f$ with respect to the usual Euclidean metric.
\begin{asm}[conctractive dynamics]
 The nonlinearity $\nabla f(\cdot)$ and the matrix $\tg{\mb A}$ induce a contraction in the sense that \[\varLambda \,\norm{\tg{\mb A}}<1\,.\]
\end{asm}

\section{Main result}
Our main theorem below, effectively guarantees that $T=\tilde{O}(n+p)$ is sufficient for the matrix $\mb \varSigma$ to be (strictly) positive definite.

\begin{thm}\label{thm:main}
    Suppose that the energy of $\tg{\mb B}$ is well-spread among its columns in the sense that $\norm{\tg{\mb B}}_{1\to 2}/\norm{\tg{\mb B}}_\F = O(p^{-1/2})$. Furthermore, suppose that the constant
    \begin{equation}
     \begin{aligned}
         \theta = \theta_{\alpha,\beta,\varepsilon, \lambda, K,\tg{\mb B}} &\defeq -\varepsilon K\log^{\frac{1}{\alpha}}\left(10\max\{\eta,3\}\right)\norm{\tg{\mb{B}}}_{1\to 2}\\
         &\hphantom{\defeq}+ 0.6\min_{i=1,\dotsc,p}\min\left\{\beta, (\lambda -\varepsilon) \lambda^{1/2}_{\min}\left({\tg{\mb B}}_{\backslash i}{\tg{\mb B}}^\T_{\backslash i}\right)\right\} \,,
     \end{aligned}  \label{eq:theta*}
 \end{equation}
 is strictly positive. Furthermore, suppose that $L$ satisfies
    \begin{align}
            L  \ge 1+\frac{\log\left(\frac{c^2}{\theta^2}\log\left(\frac{2(T-1)(p+1)}{\delta}\right)\left(\frac{\varLambda\norm{\tg{\mb B}}_{\F}}{1-\varLambda\norm{\tg{\mb A}}}\right)^2\right)}{\log\frac{1}{\varLambda\norm{\tg{\mb A}}}}\label{eq:L-main}\,,
    \end{align}
    for a sufficiently large constant $c>0$. Then, for
    \begin{align}
        T \gtrsim \max\{\eta^2,9\}\,(n+p)L\log\left(\frac{eT/L}{n+p}\right)+\log\left(\frac{8L}{\delta}\right)\label{eq:T-main}\,,
    \end{align}
    we have
    \[
        \lambda_{\min}(\mb \varSigma) \gtrsim \frac{\theta^2}{\max\{\eta,3\}}T\,,
    \]
    with probability $\ge 1-\delta$. Consequently, on the same event, \eqref{eq:estimator} recovers $\tg{\mb C}$, exactly.
\end{thm}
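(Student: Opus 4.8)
The plan is to prove the claimed lower bound on $\lambda_{\min}(\mb\varSigma)$ by Mendelson's small-ball method, adapted to the dependent trajectory, and then to invoke the strong-convexity reduction already recorded after \eqref{eq:estimator} to conclude $\hat{\mb C}=\tg{\mb C}$. Writing a generic unit direction as $\mb w=\bmx{\mb a\\\mb b}$ with $\mb a\in\mbb R^n$ and $\mb b\in\mbb R^p$, the target is a lower bound on $\mb w^\T\mb\varSigma\mb w=\sum_{t=0}^{T-1}\inp{\mb w,\mb z_t}^2$ that is uniform over $\mb w\in\mbb S^{n+p-1}$. The first step is the small-ball linearization $\sum_t\inp{\mb w,\mb z_t}^2\ge\gamma^2\,\#\{t:|\inp{\mb w,\mb z_t}|\ge\gamma\}$ with threshold $\gamma\asymp\theta$, so that it suffices to show that a fraction of order $1/\max\{\eta,3\}$ of the times clears the threshold, uniformly in $\mb w$.

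Second, I would establish a one-step conditional anti-concentration estimate. Conditioning on the $\sigma$-field generated by $\mb u_0,\dotsc,\mb u_{t-2}$ fixes $\mb x_{t-1}$, after which the only fresh randomness in $\mb z_t$ is $\mb u_{t-1}$ (entering the top block through the recursion) and the independent $\mb u_t$ (entering the bottom block directly). I would single out one input coordinate $i$, absorb it into a center $\mb m=\tg{\mb A}\mb x_{t-1}+{\tg{\mb B}}_i u_{t-1,i}$ (with ${\tg{\mb B}}_i$ the $i$th column of $\tg{\mb B}$), and use the coordinatewise symmetry of the input (Assumption \ref{asm:regularity-u}.2) to symmetrize over the remaining coordinates. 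The local approximate linearity \eqref{eq:local-approx-linearity} then replaces the symmetric part of $\inp{\mb a,\mb x_t}$ by $\inp{{\tg{\mb B}}_{\backslash i}^\T F(\mb m)^\T\mb a,\mb u_{t-1,\backslash i}}$ up to an additive error of size $\varepsilon\norm{\mb a}_2\norm{{\tg{\mb B}}_{\backslash i}\mb u_{t-1,\backslash i}}_2$; together with $\beta\inp{\mb b,\mb u_t}$ this yields a conditionally symmetric variable $S$ of variance $\norm{{\tg{\mb B}}_{\backslash i}^\T F(\mb m)^\T\mb a}_2^2+\beta^2\norm{\mb b}_2^2$. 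Lemma \ref{lem:bound-F} lower-bounds the first summand by $(\lambda-\varepsilon)^2\lambda_{\min}({\tg{\mb B}}_{\backslash i}{\tg{\mb B}}_{\backslash i}^\T)\norm{\mb a}_2^2$, and since $\norm{\mb a}_2^2+\norm{\mb b}_2^2=1$ the variance is at least $\min_i\min\{(\lambda-\varepsilon)^2\lambda_{\min}({\tg{\mb B}}_{\backslash i}{\tg{\mb B}}_{\backslash i}^\T),\beta^2\}$, while the two-direction fourth-moment bound of Lemma \ref{lem:concatenated-moments} gives $\E(S^4)\le\max\{\eta,3\}(\E(S^2))^2$. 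A Paley--Zygmund inequality then delivers a conditional small-ball probability $p_0\gtrsim 1/\max\{\eta,3\}$ at level $\gamma$; the worst case over the singled-out coordinate, the Paley--Zygmund level ($0.6$), and the subtraction of the linearization error (controlled via the Orlicz norm $K$, the quantile factor $\log^{1/\alpha}(10\max\{\eta,3\})$, and the column spread $\norm{\tg{\mb B}}_{1\to 2}$) are exactly what assemble into the constant $\theta$ of \eqref{eq:theta*}.

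Third, I would upgrade this one-step estimate to a uniform lower bound on $\#\{t:|\inp{\mb w,\mb z_t}|\ge\gamma\}$. Here the contraction $\varLambda\norm{\tg{\mb A}}<1$ is used to truncate the memory: $\mb x_t$ lies within $O(\theta)$ of a surrogate $\tilde{\mb x}_t$ depending only on $\mb u_{t-L},\dotsc,\mb u_{t-1}$, and the choice of $L$ in \eqref{eq:L-main} makes this truncation error negligible at scale $\gamma$ (the factor $\tfrac{\varLambda\norm{\tg{\mb B}}_{\F}}{1-\varLambda\norm{\tg{\mb A}}}$ there being the stationary scale of the state). Splitting $\{0,\dotsc,T-1\}$ into roughly $L$ interleaved subsequences whose truncated dependence windows are disjoint makes the surrogate samples independent within a subsequence, so a binomial/Bernstein tail controls each count around its mean $\gtrsim p_0\,T/L$, the union over the $L$ subsequences producing the $\log(8L/\delta)$ term. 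Uniformity over $\mb w$ follows from a VC/empirical-process argument for the class of thresholded linear functionals (VC dimension $O(n+p)$), which—through the $1/p_0^2$ scaling of the small-ball method—is responsible for the $\max\{\eta^2,9\}\,(n+p)L\log(eT/(L(n+p)))$ requirement in \eqref{eq:T-main}. Combining the three steps gives $\lambda_{\min}(\mb\varSigma)\gtrsim\gamma^2 p_0 T\gtrsim\theta^2 T/\max\{\eta,3\}$ on an event of probability $\ge 1-\delta$; on this event $\mb\varSigma\succ\mb 0$, and the strong-convexity inequality forces $\hat{\mb C}=\tg{\mb C}$.

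The main obstacle is the third step: converting a purely conditional, one-step anti-concentration estimate into a count lower bound that is simultaneously valid over the whole sphere while the samples $\mb z_t$ remain strongly dependent through the nonlinear recursion. The delicate point is to control the interaction between the memory-truncation error and the linearization error $\varepsilon$ \emph{uniformly} in $\mb w$, so that the $L$-dependent surrogate process faithfully represents $\mb\varSigma$ at the scale $\theta$; this coupling is what simultaneously dictates the form of $L$ in \eqref{eq:L-main} and the sample complexity \eqref{eq:T-main}.
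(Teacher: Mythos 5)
Your outline reproduces the architecture of the paper's proof: your small-ball linearization with threshold $\asymp\theta$ and the VC argument for thresholded linear functionals is the paper's Lemma \ref{lem:Sl-lower-bound}; your memory truncation via $\varLambda\norm{\tg{\mb A}}<1$ with $L$ interleaved subsequences of independent surrogates is exactly the paper's ``restarted'' processes $\mb z_t^{(\ell)}$ together with Lemma \ref{lem:curlySl upper bound}; and your coordinate symmetrization combined with \eqref{eq:local-approx-linearity}, the fourth-moment bound of Lemma \ref{lem:concatenated-moments}, and Paley--Zygmund is the paper's Lemmas \ref{lem:lower-bound-P} and \ref{lem:Apply-PZ}. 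So this is not a different route; the question is whether your version of the anti-concentration step actually yields the constant $\theta$ of \eqref{eq:theta*}.

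It does not, and this is a genuine gap rather than a cosmetic one. You symmetrize over \emph{all but one} coordinate of $\mb u_{t-1}$: your center is $\mb m=\tg{\mb A}\mb x_{t-1}+{\tg{\mb B}}_i u_{t-1,i}$ and your offset in \eqref{eq:local-approx-linearity} is $\mb y=\tg{\mb B}_{\backslash i}\mb u_{t-1,\backslash i}$, so the linearization error you must subtract is $\varepsilon\norm{\tg{\mb B}_{\backslash i}\mb u_{t-1,\backslash i}}_2$. But $\E\norm{\tg{\mb B}_{\backslash i}\mb u_{t-1,\backslash i}}_2^2=\norm{\tg{\mb B}_{\backslash i}}_\F^2$, and this quantity concentrates at that scale, so under the well-spreadness hypothesis your error term is of order $\varepsilon\norm{\tg{\mb B}}_\F\asymp\sqrt{p}\,\varepsilon\norm{\tg{\mb B}}_{1\to 2}$ --- larger by a factor $\asymp\sqrt{p}$ than the term $\varepsilon K\log^{\frac{1}{\alpha}}\left(10\max\{\eta,3\}\right)\norm{\tg{\mb B}}_{1\to 2}$ that appears in \eqref{eq:theta*}. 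Your claim that this error ``assembles into'' $\theta$ via the column spread is therefore false; with your pairing, the small-ball level stays positive only if $\varepsilon\lesssim(\lambda-\varepsilon)\,\lambda^{1/2}_{\min}\left(\tg{\mb B}_{\backslash i}\tg{\mb B}^\T_{\backslash i}\right)/\norm{\tg{\mb B}}_\F$, a dimension-dependent restriction strictly stronger than the theorem's hypothesis $\theta>0$. The paper's symmetrization goes the opposite way: it flips the sign of the \emph{single} random coordinate $i_{t-1}$ of $\mb u_{t-1}$ (and simultaneously negates $\mb u_t$), so that the quantity paying the $\varepsilon$ penalty is the one-column vector $\tg{\mb B}\left(\frac12\mb u_{t-1}-\frac12\mb u_{t-1}^{-i_{t-1}}\right)$, whose norm is at most $|u_{t-1,i_{t-1}}|\norm{\tg{\mb B}}_{1\to 2}$; its Orlicz-$\psi_\alpha$ quantile at level $\gamma=0.2/\max\{\eta,3\}$ is precisely the first term of \eqref{eq:theta*}, while the aggregate part $\tg{\mb B}_{\backslash i_{t-1}}\mb u_{t-1}$ is kept inside the Paley--Zygmund main term, producing the $\lambda^{1/2}_{\min}\left(\tg{\mb B}_{\backslash i}\tg{\mb B}^\T_{\backslash i}\right)$ factor. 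The trade is unavoidable: whichever piece plays the role of the offset in \eqref{eq:local-approx-linearity} is charged the $\varepsilon$ error, and your assignment charges it to the large piece. To repair your argument you must redo your second step with the single-coordinate flip; your first and third steps then go through essentially as you describe.
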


A critical condition of Theorem \ref{thm:main} is that $\theta$ is strictly positive. This condition implicitly requires $\varepsilon$ in \eqref{eq:local-approx-linearity} to be sufficiently small, which in turn implies the condition number of $f$ is sufficiently close to $1$ (i.e. $\nabla f$ is nearly linear). Furthermore, it is needed that the energy of $\mb B_\star$ to be well-spread not only among its columns, but also in a ``spectral'' sense. More precisely, we need the quantity
\[
    \max_{i=1,\dotsc,p}\frac{\norm{\tg{\mb B}}_{1\to 2}}{\lambda_{\min}^{1/2}({\tg{\mb B}}_{\backslash i}{\tg{\mb B}}_{\backslash i}^\T)}\,,
\]
to be sufficiently small.
The equation \eqref{eq:theta*} also suggests that a reasonable choice of the normalizing constant $\beta$ should satisfy $\beta \approx (\lambda -\varepsilon)\lambda_{\min}^{1/2}\left(\mb B_\star\mb B_\star^\T\right)$.
\section{Simulation}
We evaluated the proposed estimator numerically on synthetic data in a setup similar to the experiments of \citep{Oymak2019-Stochastic}. In all of the experiments, we consider the dimensions to be $n = 50$, $p=100$, and the time horizon to be $T=500$. For $\alpha \in \{0.2,\,0.8\}$ we choose $\tg{\mb A}=\alpha \mb R$ with $\mb R$ being a uniformly distributed $n\times n$ orthogonal matrix. Furthermore, $\tg{\mb B}\in\mbb{R}^{n\times p}$ is generated randomly with i.i.d. standard normal entries. The normalizing factor is chosen as $\beta = $ following  what prescribed in \citep{Oymak2019-Stochastic}. We consider two different models for the input $\mb u$. Let $g\sim\mr{Normal}(0,1)$ denote a standard Normal random scalar. The first model is similar to the model of \citep{Oymak2019-Stochastic} where the entries of $\mb u$ are i.i.d. copies of $g$, whereas in the second model takes i.i.d. copies of $g^3$ as the entries of $\mb u$. We refer to these models as the Gaussian model and the heavy-tailed model, respectively.
The nonlinearity in \eqref{eq:nonlinear-dynamics} is described by one of the functions
    \[f(\mb x) = \frac{1-\rho}{2}\sum_{i=1}^n (x_i)_+^2 + \frac{\rho}{2}\sum_{i=1}^n x_i^2\,,\]
at $\rho = 1$ (i.e., linear activation), $\rho = 0.5$ (i.e., leaky ReLU activation with slope $0.5$ over $\mbb R_{\le 0}$), $\rho = 0.3$ (i.e., leaky ReLU activation with slope $0.3$ over $\mbb R_{\le 0}$), and $\rho = 0$ (i.e., ReLU activation).

For each choice of $\alpha$ and $\rho$, we solved \eqref{eq:estimator} using Nesterov's Accelerated Gradient Method (AGM) (\citealp{Nesterov1983-Method}; \citealp[][Section 2.2]{Nesterov2013-Introductory}), for $100$ randomly generated instances of the problem. For the Gaussian model the step-size is set to $10^{-3}$, whereas for the heavy-tailed model the step-size is set to $10^{-4}$. In each trial, the AGM is run for a maximum of $500$ iterations and terminated only if the relative error dropped below $10^{-8}$ (i.e., $\norm{\hat{\mb C} - \tg{\mb C}}_\F^2/\norm{\tg{\mb C}}_\F^2\le 10^{-8}$). The optimization task can be solved by the SGD as well. However, slower convergence of the SGD is only tolerable for large-scale problems where lower memory load is crucial. Nevertheless, because the estimator \eqref{eq:estimator} is formulated as a convex program, we can apply the SGD methods with variance reduction \citep[see e.g.,][]{Johnson2013-Accelerating,Schmidt2017-Minimizing,Defazio2014-SAGA} and rely on their theoretical guarantees.

\begin{figure}
    \centering
    \begin{subfigure}{0.49\textwidth}
            \centering
            \includegraphics[width=\textwidth]{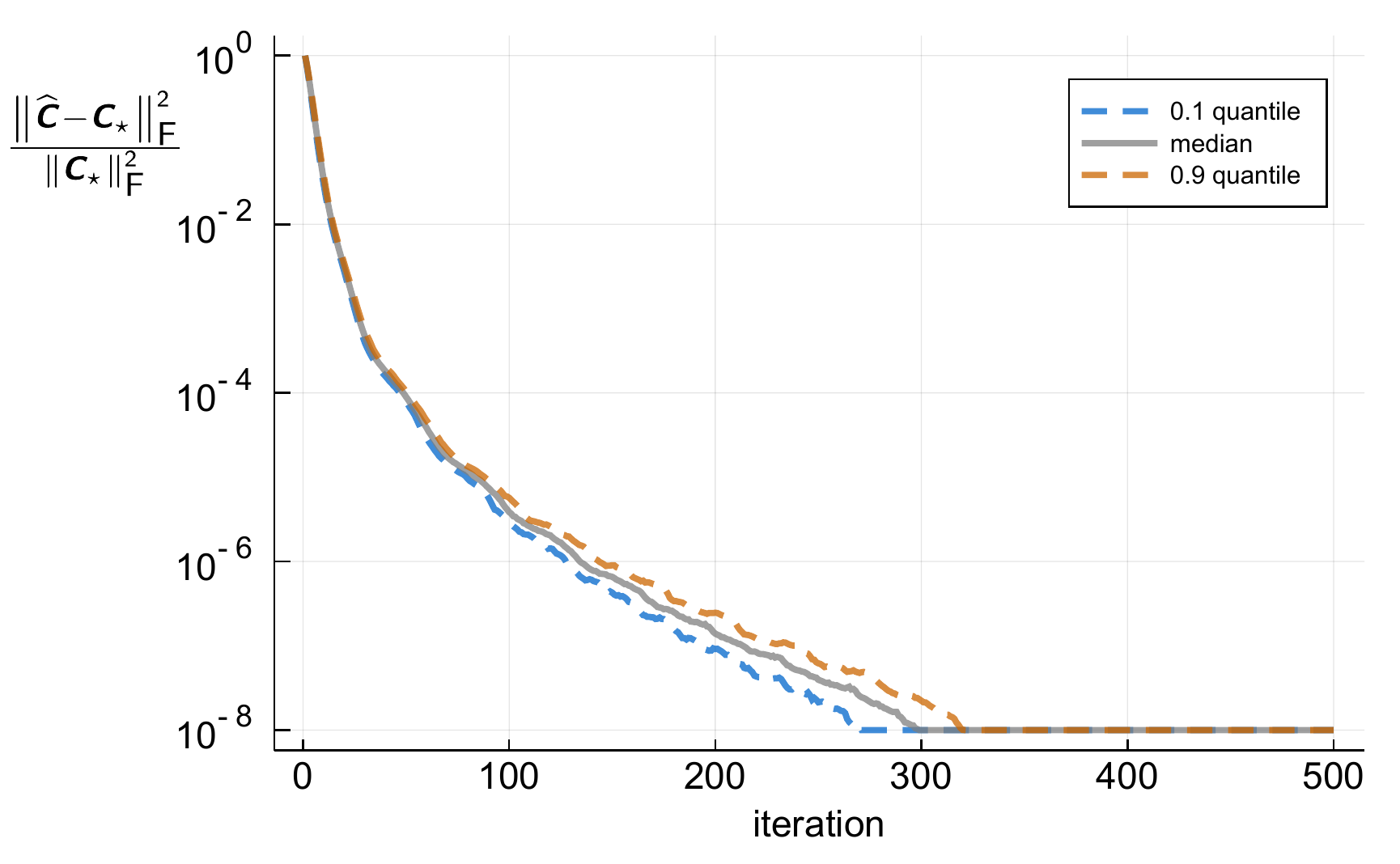}
            \caption{$\alpha = 0.2$, $\rho = 1$}
    \end{subfigure}
    \begin{subfigure}{0.49\textwidth}
            \centering
            \includegraphics[width=\textwidth]{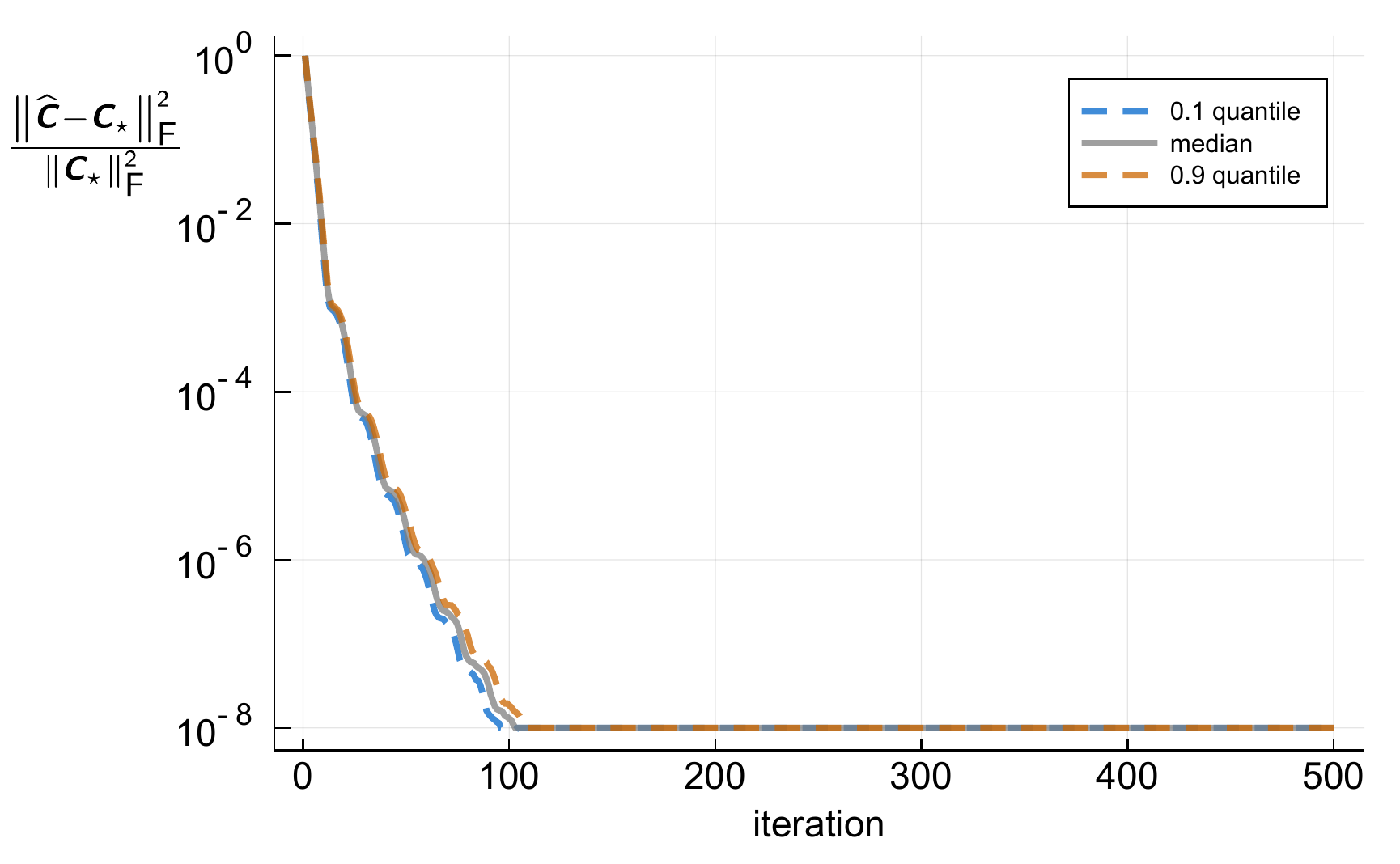}
            \caption{$\alpha = 0.8$, $\rho = 1$}
    \end{subfigure}

    \begin{subfigure}{0.49\textwidth}
            \centering
            \includegraphics[width=\textwidth]{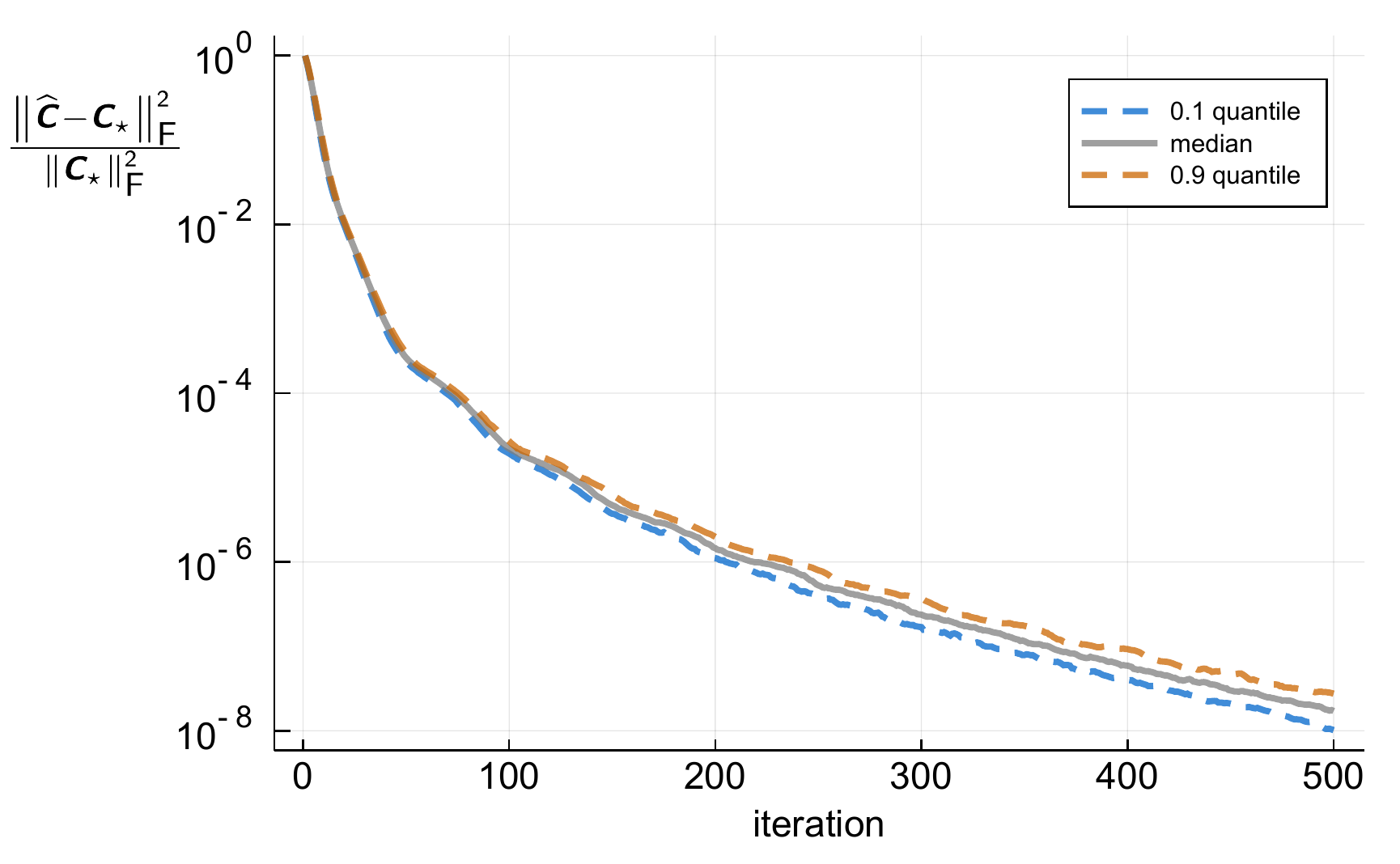}
            \caption{$\alpha = 0.2$, $\rho = 0.5$}
    \end{subfigure}
    \begin{subfigure}{0.49\textwidth}
            \centering
            \includegraphics[width=\textwidth]{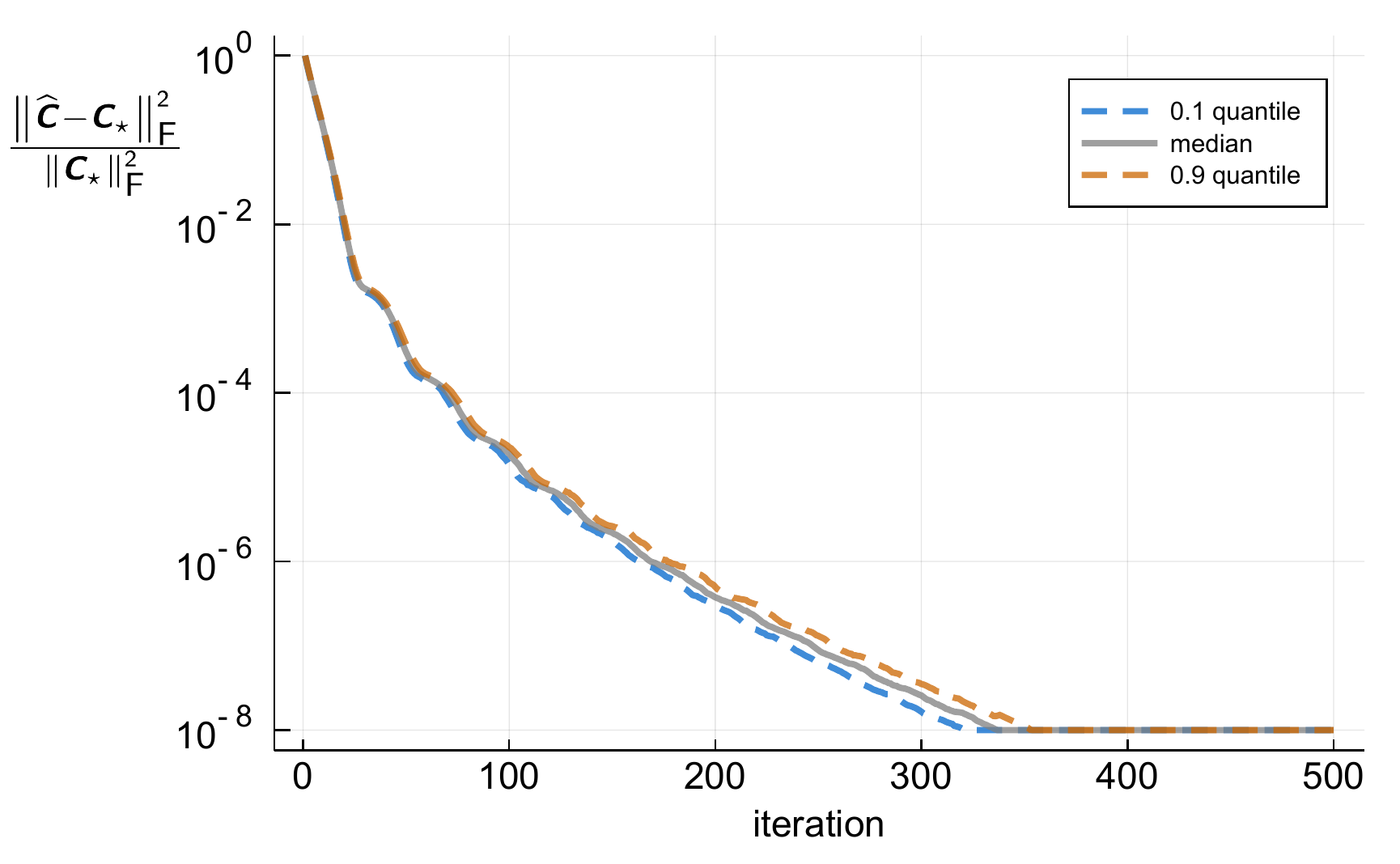}
            \caption{$\alpha = 0.8$, $\rho = 0.5$}
    \end{subfigure}

    \begin{subfigure}{0.49\textwidth}
            \centering
            \includegraphics[width=\textwidth]{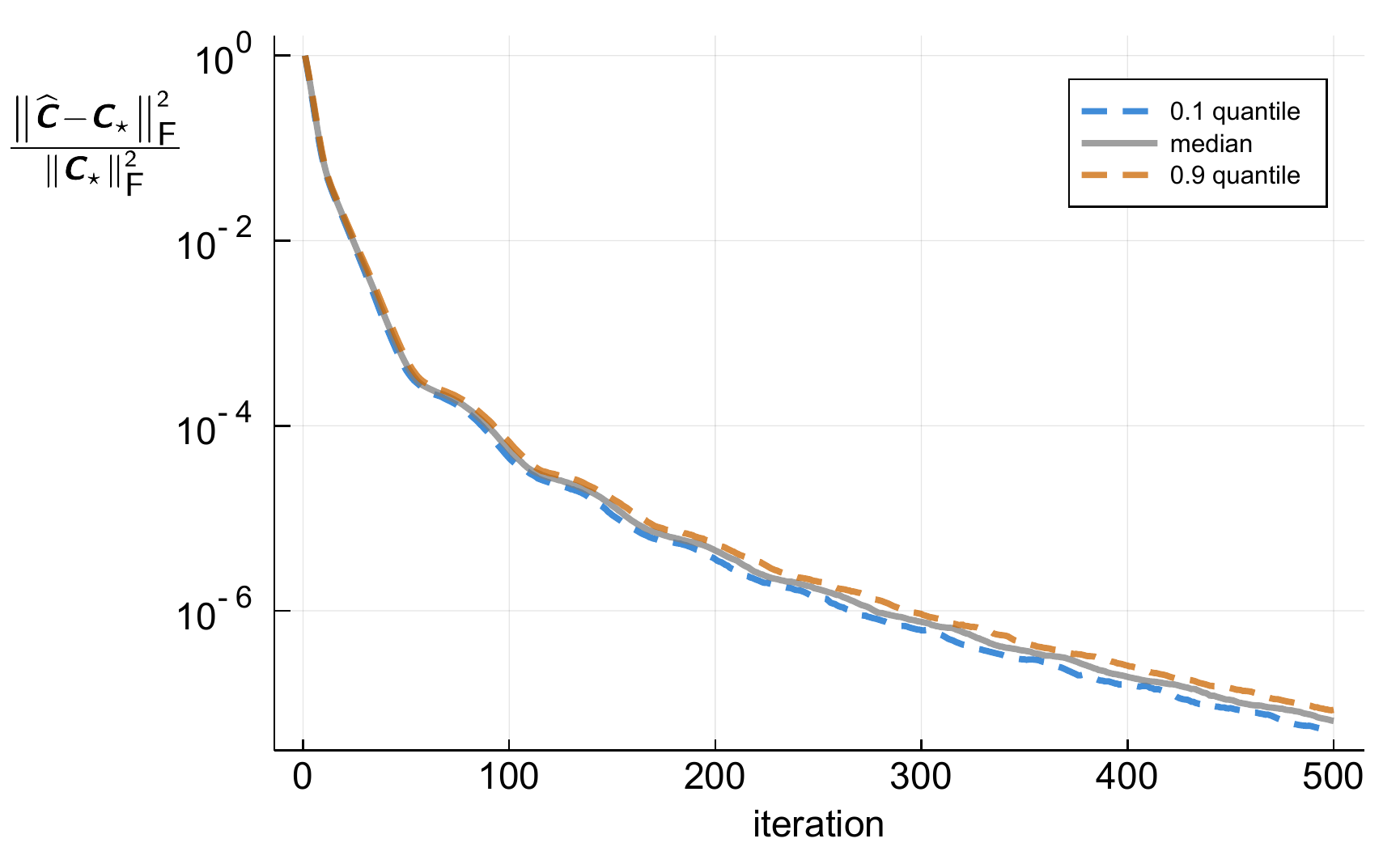}
            \caption{$\alpha = 0.2$, $\rho = 0.3$}
    \end{subfigure}
    \begin{subfigure}{0.49\textwidth}
            \centering
            \includegraphics[width=\textwidth]{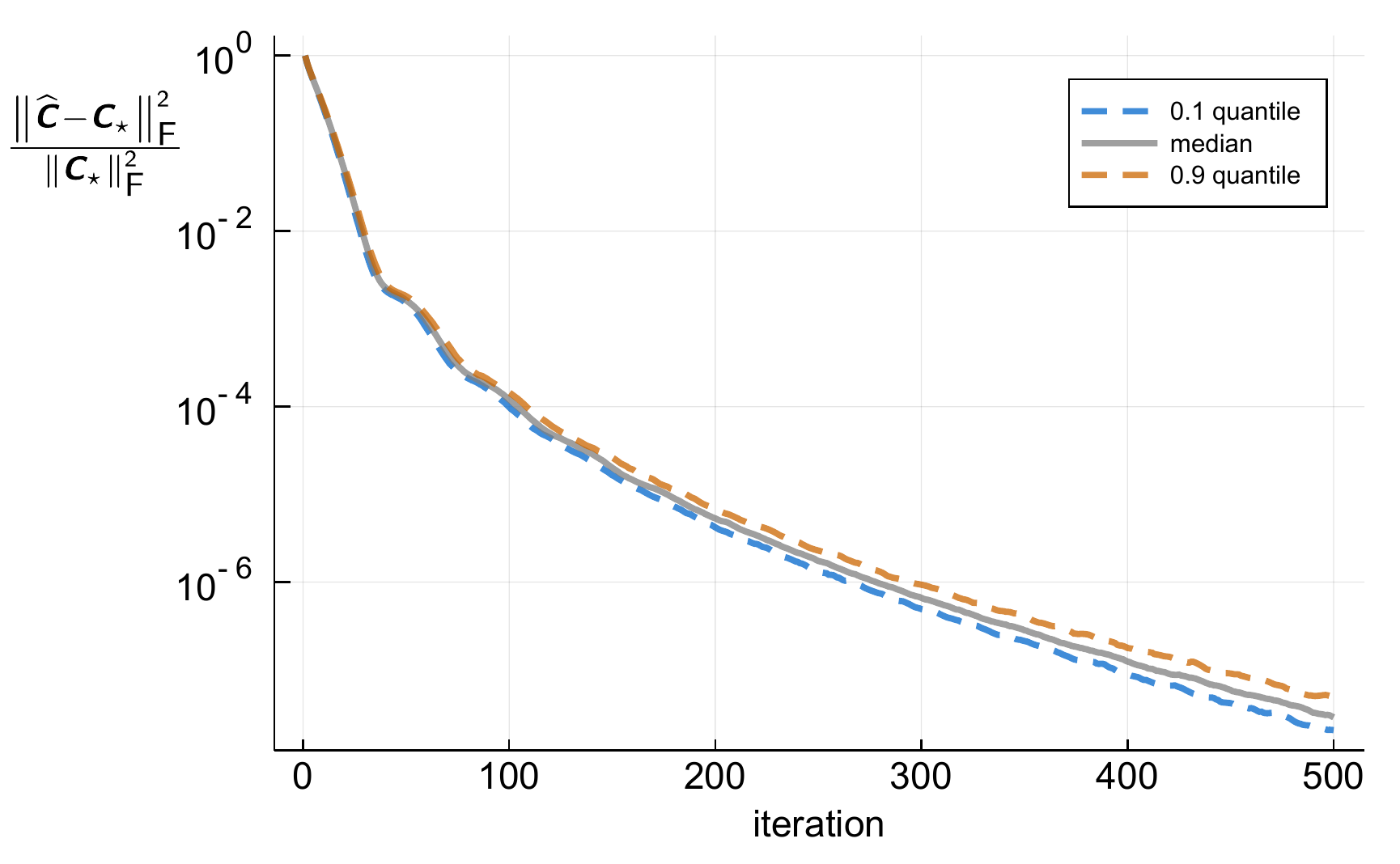}
            \caption{$\alpha = 0.8$, $\rho = 0.3$}
    \end{subfigure}

    \begin{subfigure}{0.49\textwidth}
            \centering
            \includegraphics[width=\textwidth]{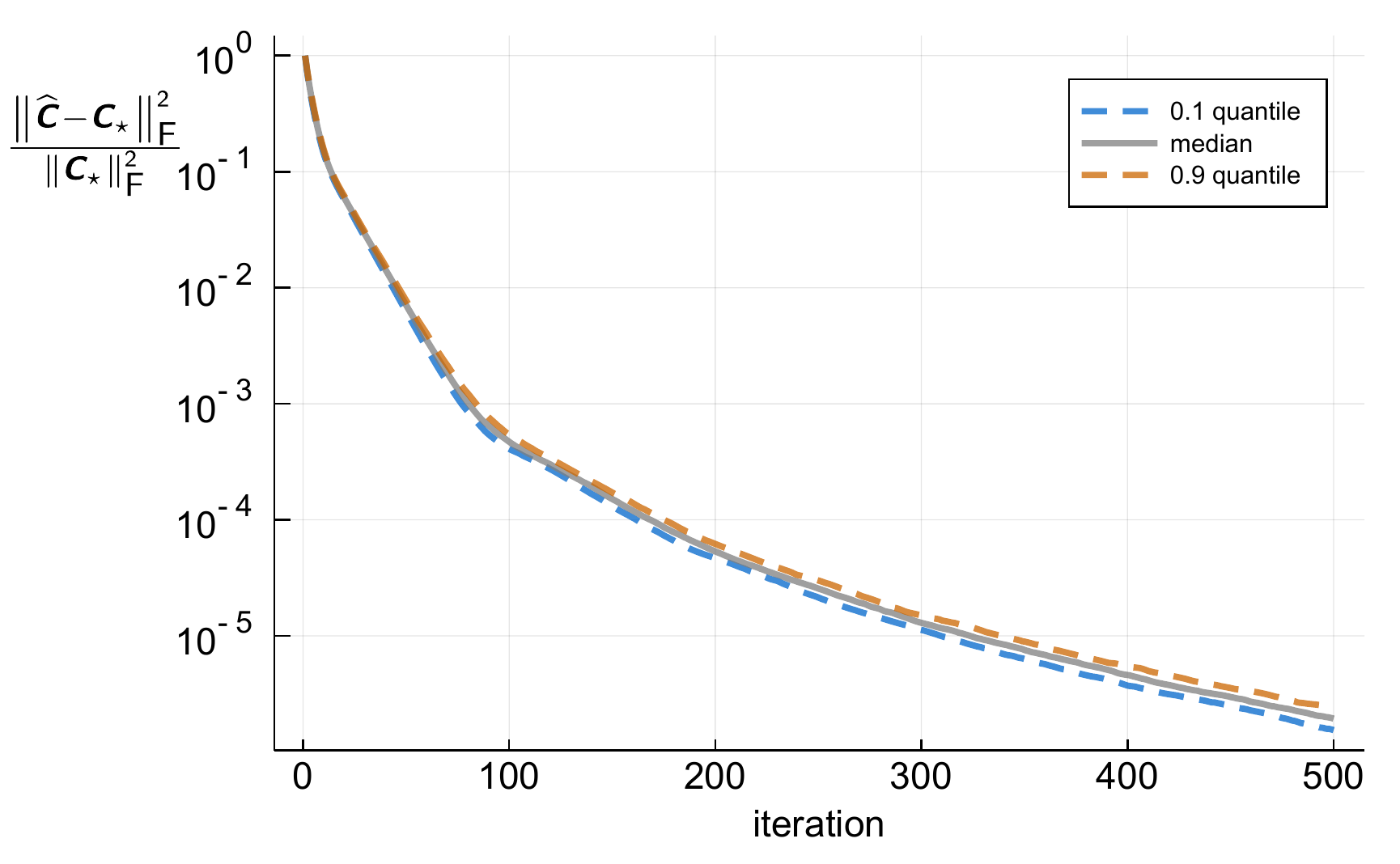}
            \caption{$\alpha = 0.2$, $\rho = 0$}
    \end{subfigure}
    \begin{subfigure}{0.49\textwidth}
            \centering
            \includegraphics[width=\textwidth]{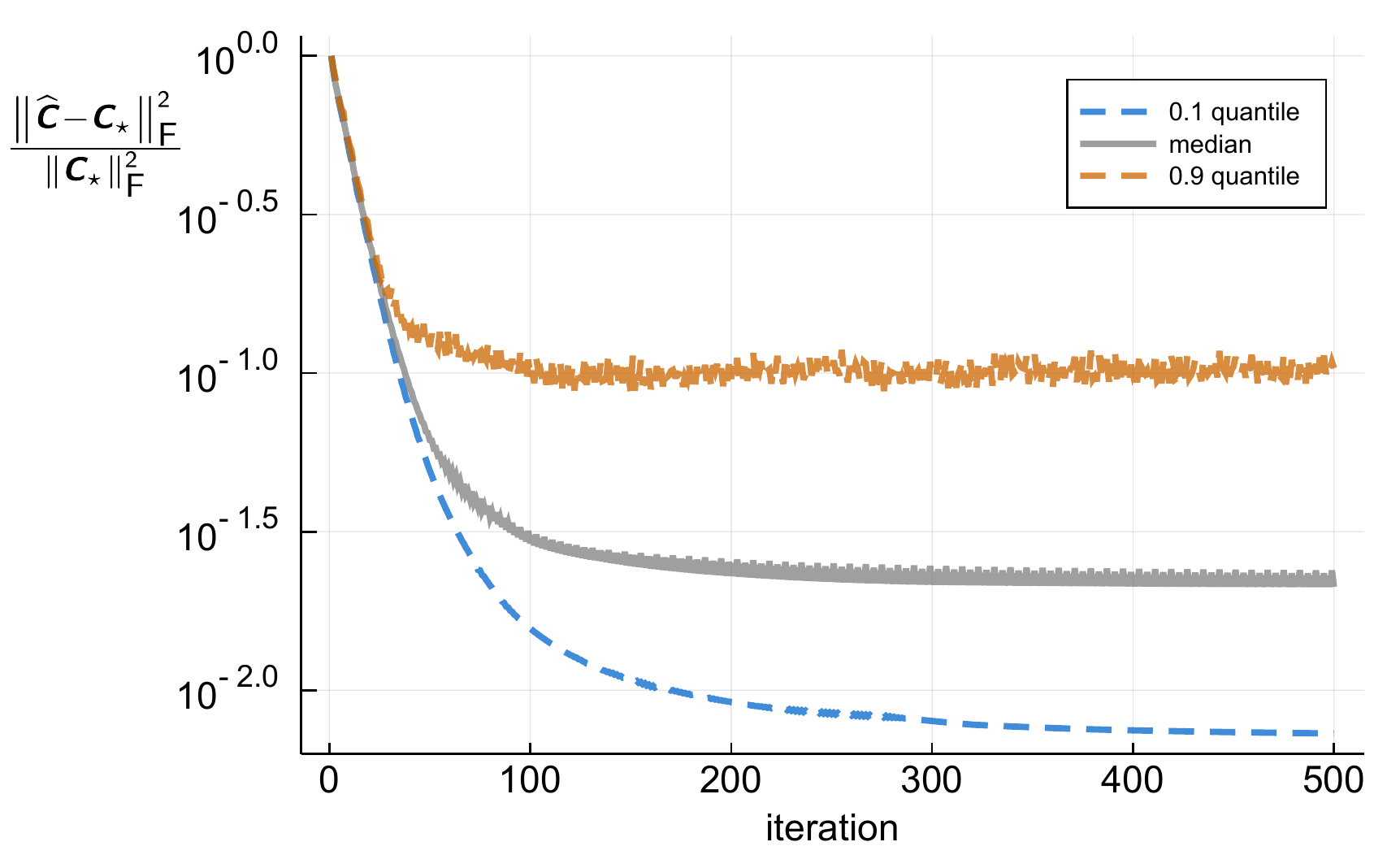}
            \caption{$\alpha = 0.8$, $\rho = 0$}
    \end{subfigure}
    \caption{Gaussian model}
    \label{fig:Gaussian-model}
\end{figure}

\begin{figure}
    \centering
    \begin{subfigure}{0.49\textwidth}
            \centering
            \includegraphics[width=\textwidth]{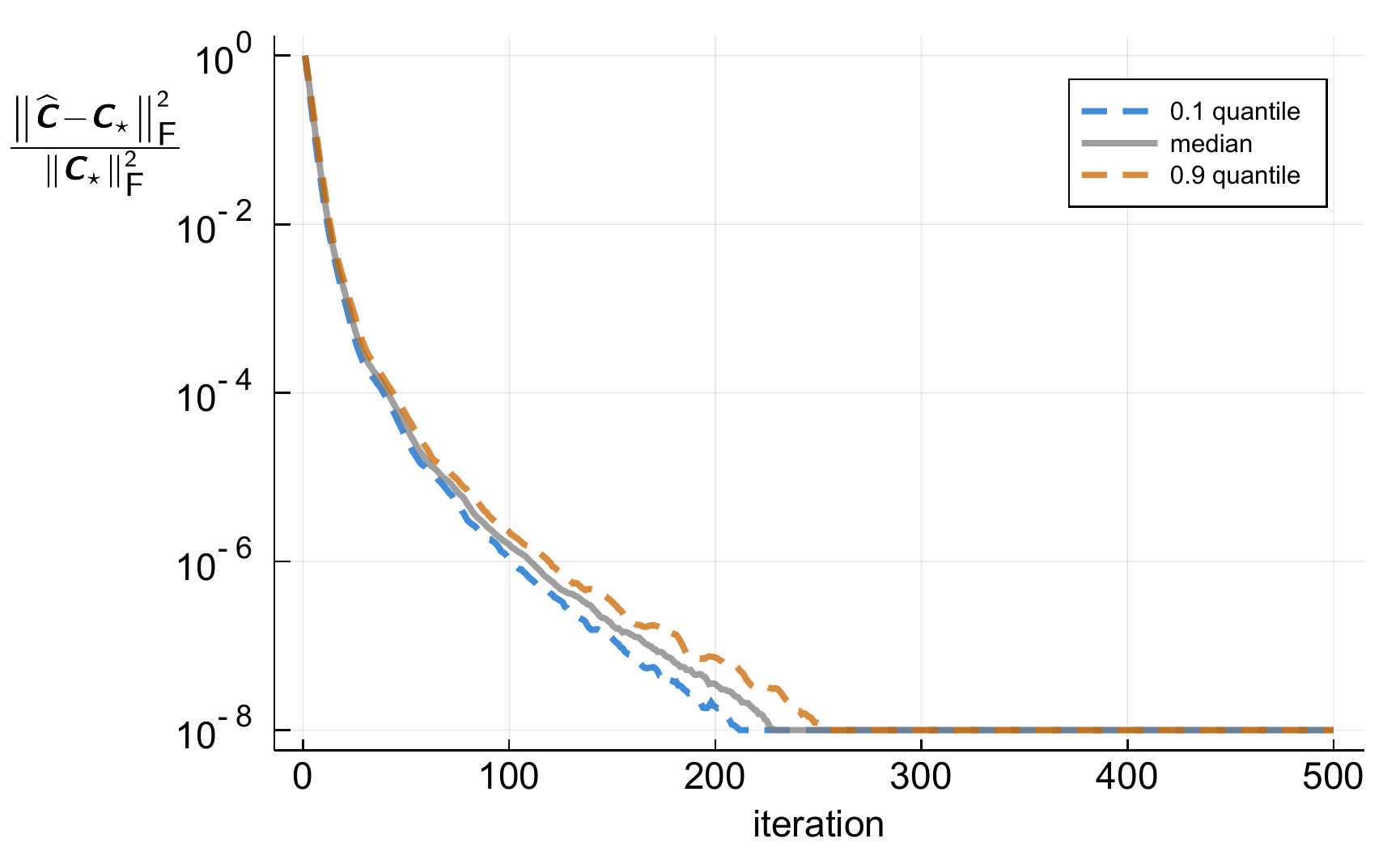}
            \caption{$\alpha = 0.2$, $\rho = 1$}
    \end{subfigure}
    \begin{subfigure}{0.49\textwidth}
            \centering
            \includegraphics[width=\textwidth]{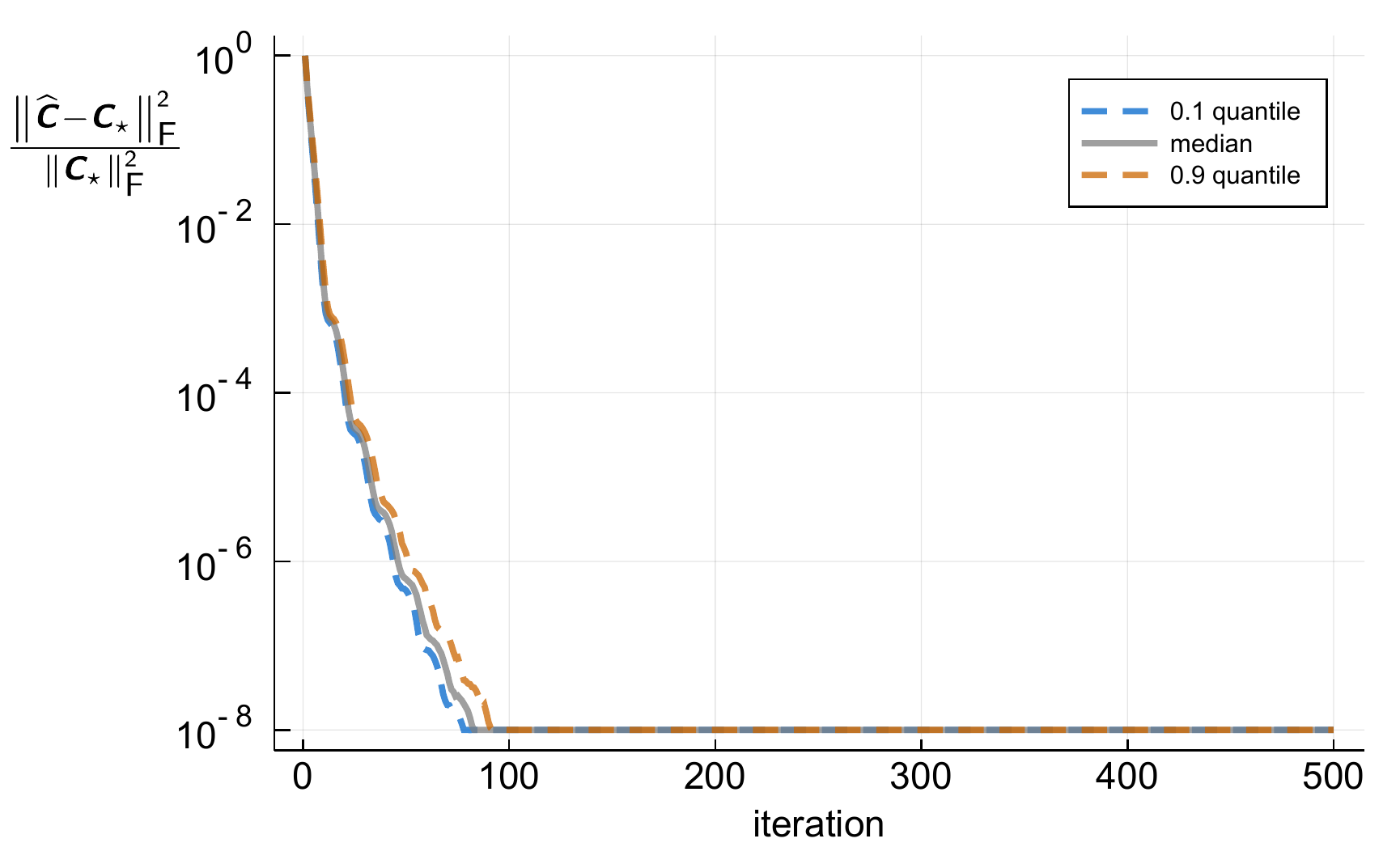}
            \caption{$\alpha = 0.8$, $\rho = 1$}
    \end{subfigure}

    \begin{subfigure}{0.49\textwidth}
            \centering
            \includegraphics[width=\textwidth]{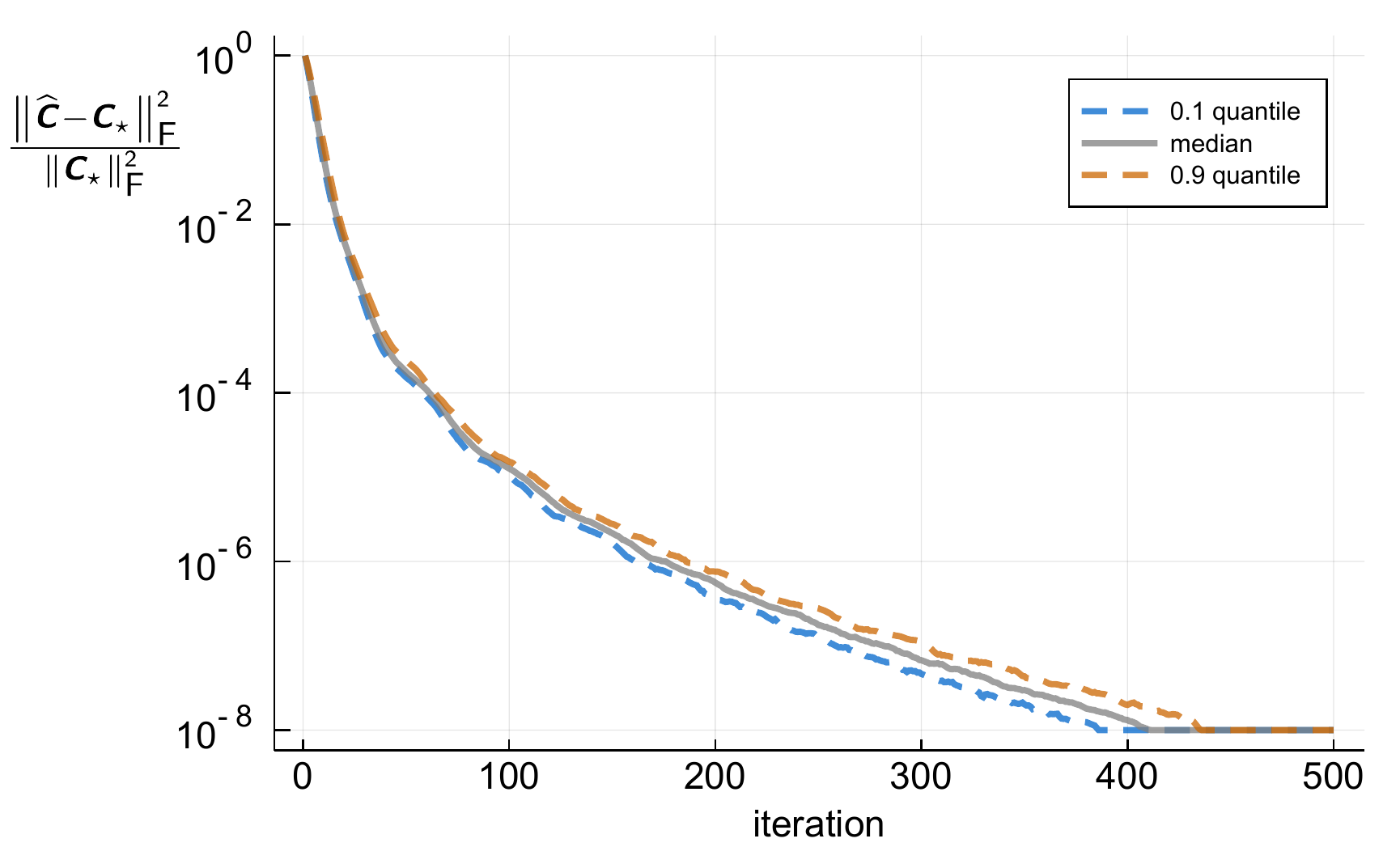}
            \caption{$\alpha = 0.2$, $\rho = 0.5$}
    \end{subfigure}
    \begin{subfigure}{0.49\textwidth}
            \centering
            \includegraphics[width=\textwidth]{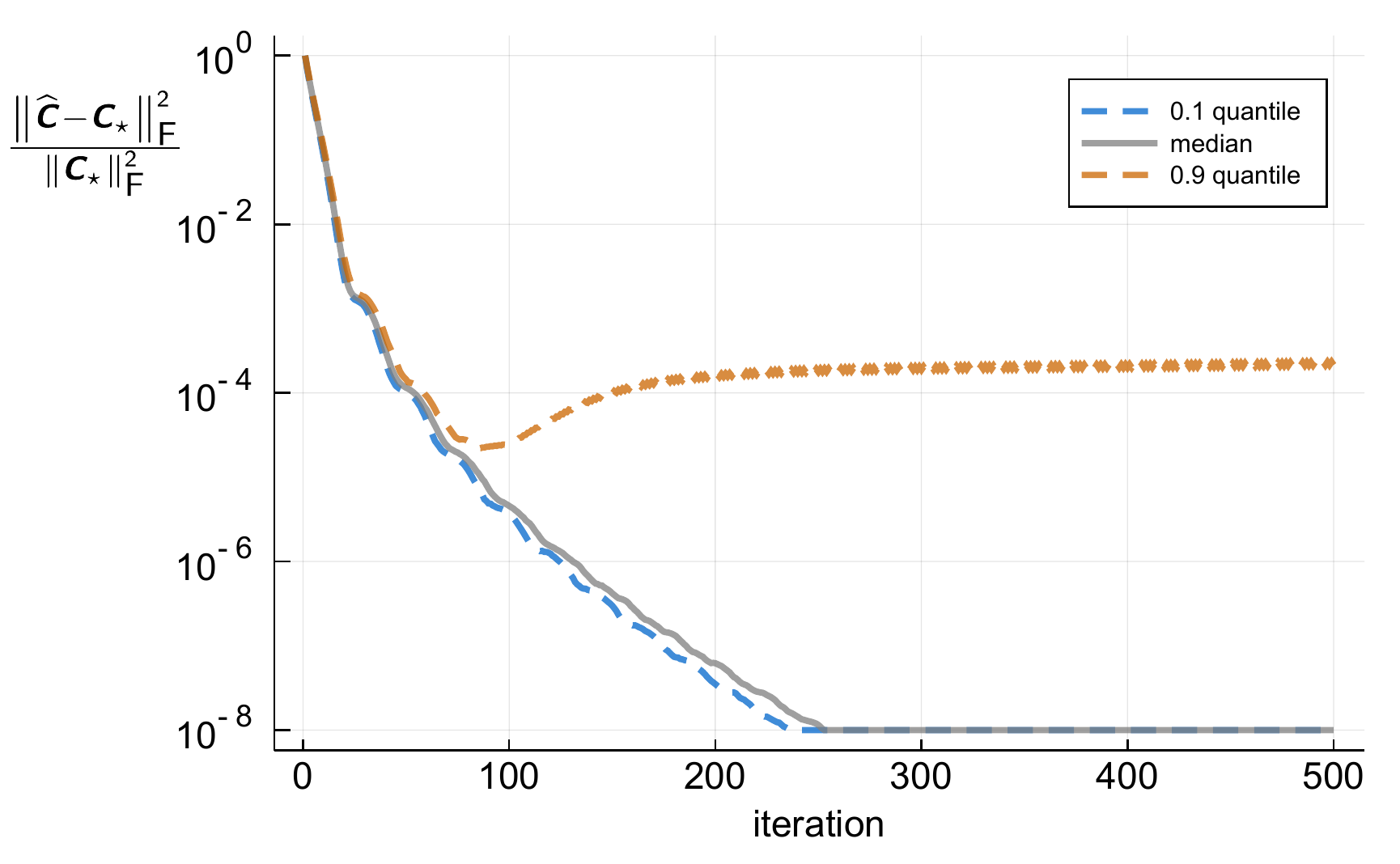}
            \caption{$\alpha = 0.8$, $\rho = 0.5$}
    \end{subfigure}

    \begin{subfigure}{0.49\textwidth}
            \centering
            \includegraphics[width=\textwidth]{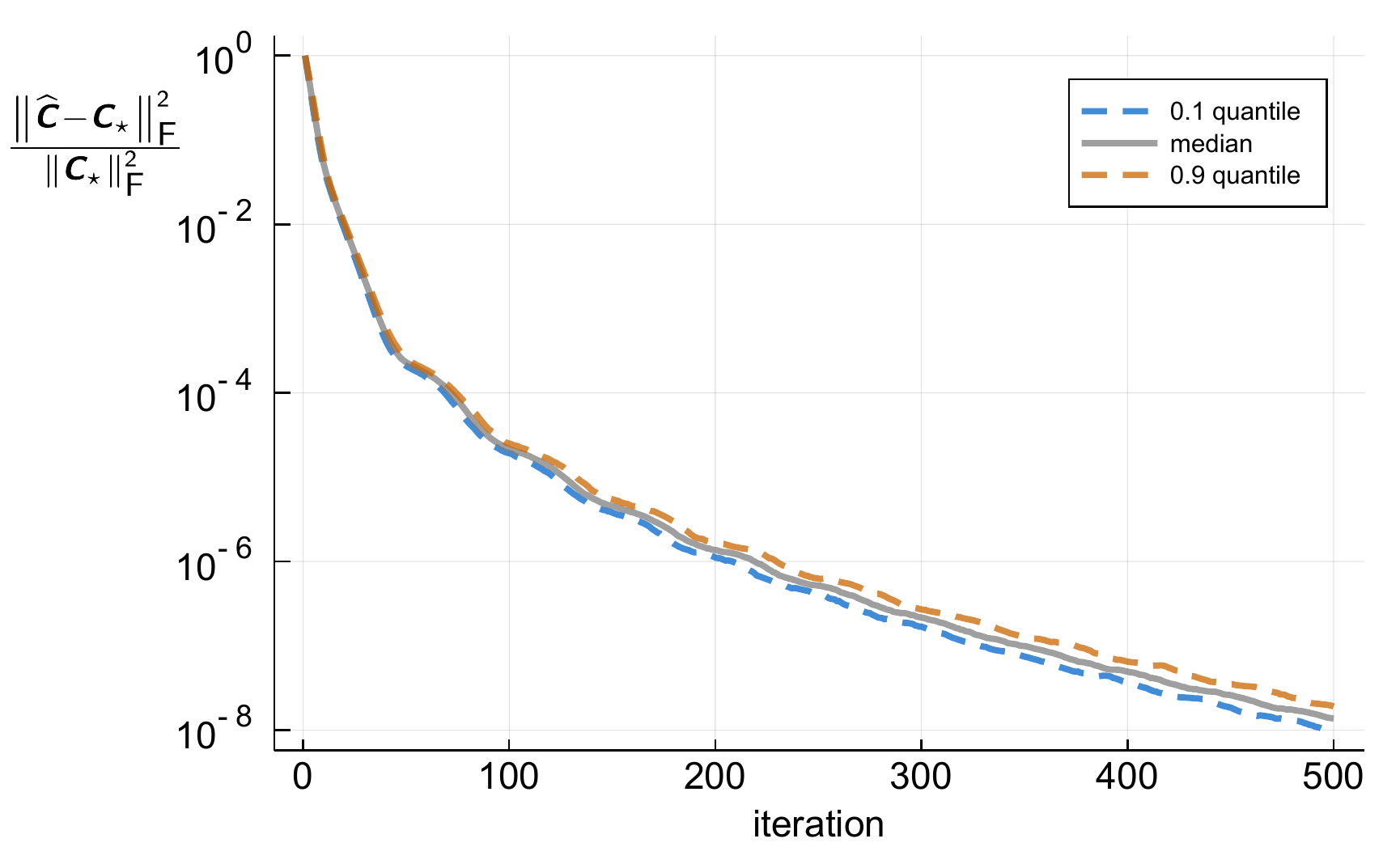}
            \caption{$\alpha = 0.2$, $\rho = 0.3$}
    \end{subfigure}
    \begin{subfigure}{0.49\textwidth}
            \centering
            \includegraphics[width=\textwidth]{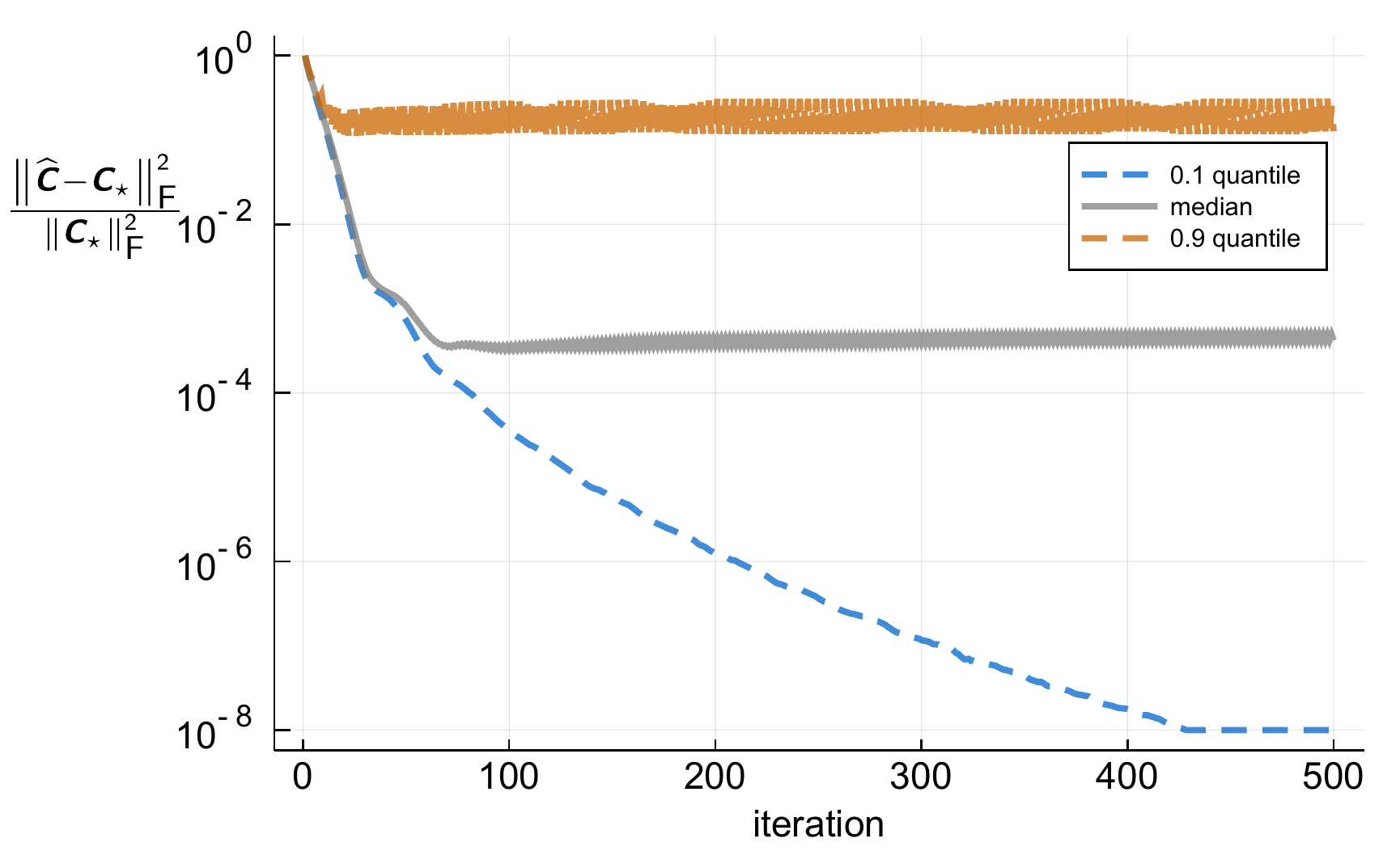}
            \caption{$\alpha = 0.8$, $\rho = 0.3$}
    \end{subfigure}

    \begin{subfigure}{0.49\textwidth}
            \centering
            \includegraphics[width=\textwidth]{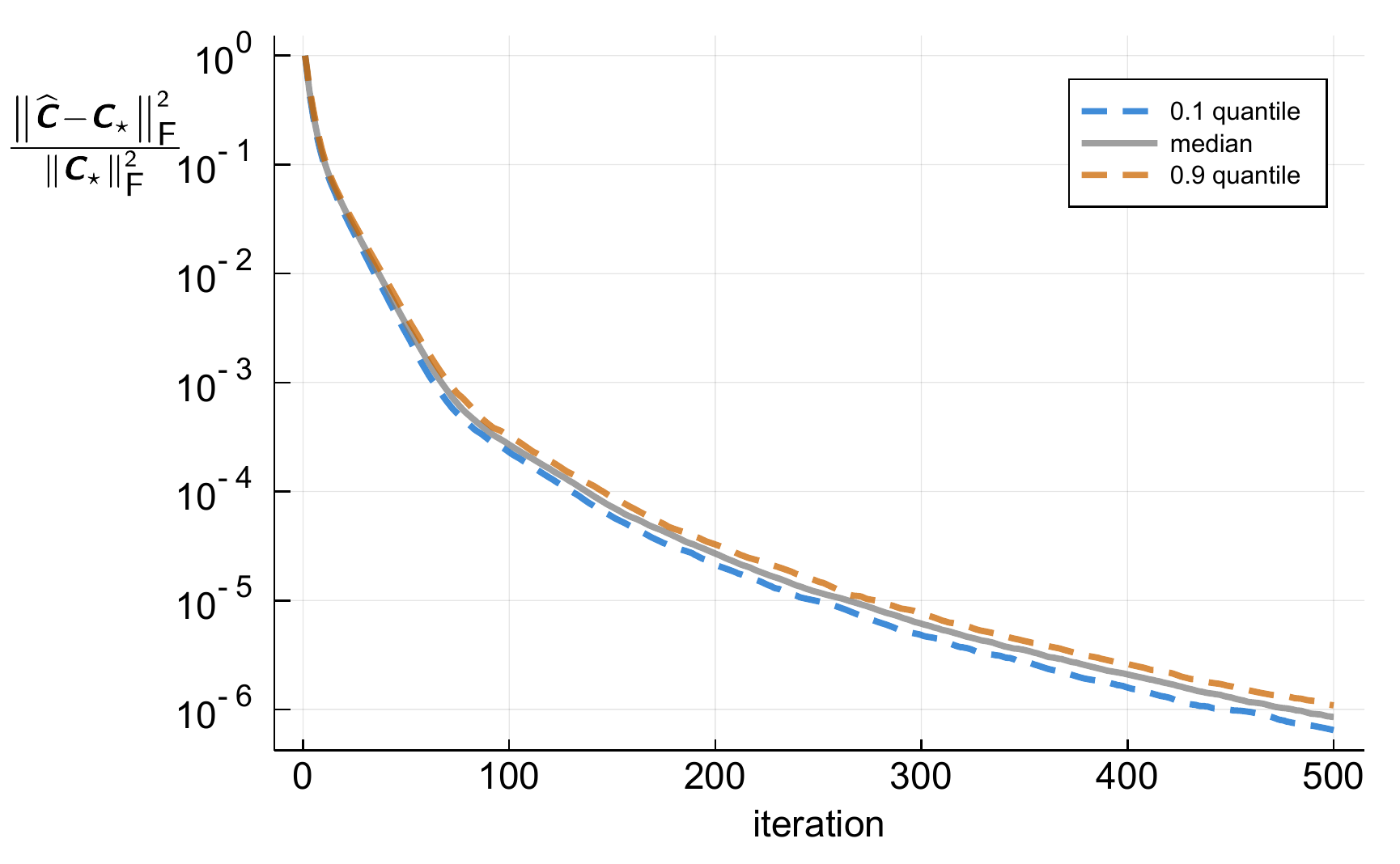}
            \caption{$\alpha = 0.2$, $\rho = 0$}
    \end{subfigure}
    \begin{subfigure}{0.49\textwidth}
            \centering
            \includegraphics[width=\textwidth]{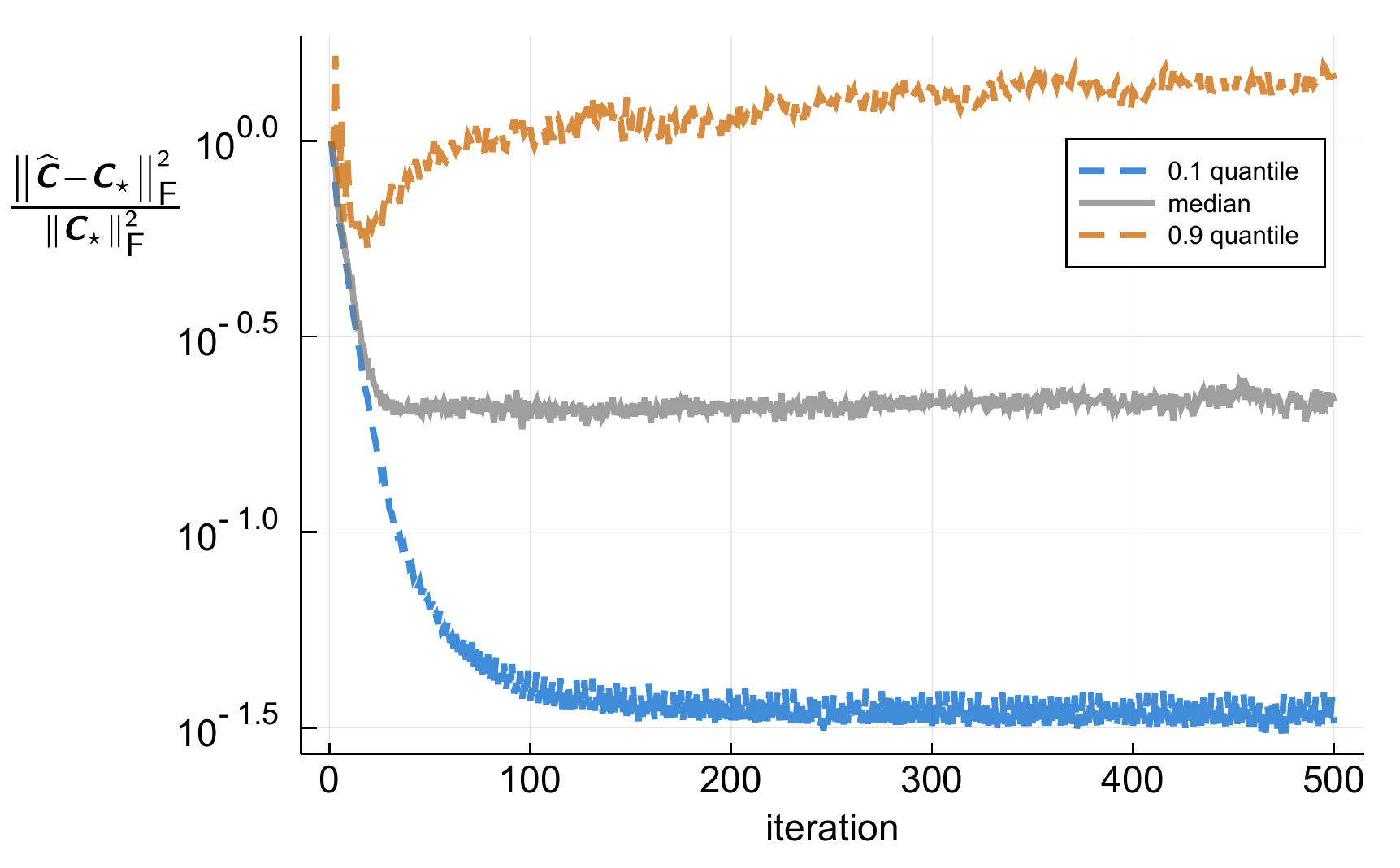}
            \caption{$\alpha = 0.8$, $\rho = 0$}
    \end{subfigure}
    \caption{Heavy-tailed model}
    \label{fig:heavy-tailed-model}
\end{figure}

Figures \ref{fig:Gaussian-model} and \ref{fig:heavy-tailed-model} depict the achieved relative error  under the Gaussian model and the heavy-tailed model for the chosen values of $\alpha$ and $\rho$, respectively. The solid lines show the median of the achieved relative error, whereas the dashed lines show the $0.1$ and $0.9$ quantiles of the relative error. Perhaps, the result that might strike as counter intuitive at first, is that the estimation performance is not monotonic with respect to the strength of stability. For instance, the plots in the first two rows of Figure \ref{fig:Gaussian-model} suggest that convergence is faster for the less stable system (i.e., $\alpha = 0.8$). A similar conclusion can be made regarding the plots in the first row of Figure \ref{fig:heavy-tailed-model} corresponding to linear activation functions. However, it appears that this behavior is sensitive to the level of nonlinearity, particularly in the case of the heavy-tailed input distributions.

\section{Proof of the main result}
\begin{proof}[\textbf{Proof of Theorem \ref{thm:main}}]
Recall the definition of $\mb{\varSigma}$ in \eqref{eq:covariance}. We would like to find a lower bound for the
smallest eigenvalue of $\mb{\varSigma}$ that holds with high probability.
Consider a sufficiently large integer $L$ as a \emph{stride} parameter
and for $\ell=0,\dotsc,L-1$ let
\begin{align*}
\mc T_{\ell} & =\left\{ t\,:\,L\le t< T\ \text{and}\ t=\ell\ \mr{mod}\ L\right\} \,,
\end{align*}
which partition $\left\{ L,\dotsc,T-1\right\} $ to sets of subsampled
time indices with stride $L$. For each $\ell=0,\dotsc,L-1$, we define
the ``restarted'' state variables $\mb x_{t}^{\left(\ell\right)}$
through the recursion
\begin{align*}
\mb x_{t+1}^{\left(\ell\right)} & =\begin{cases}
\mb 0 & ,\,t=\ell\ \mr{mod}\ L\\
\nabla f\left(\tg{\mb A}\mb x_{t}^{\left(\ell\right)}+\tg{\mb B}\mb u_{t}\right) & ,\,t\ne\ell\ \mr{mod}\ L\,,
\end{cases}
\end{align*}
and the corresponding restarted version of $\mb z_{t}$ as
\begin{align}
\mb z_{t}^{\left(\ell\right)} & =\bmx{\mb x_{t}^{\left(\ell\right)}\\
\beta\, \mb u_{t}
}\,. \label{eq:restarted-zt}
\end{align}
 For any $\mb w\in\mbb S^{n+p-1}$ we have
\begin{align*}
\mb w^{\T}\mb{\varSigma}\mb w & \ge\sum_{t=L}^{T-1}\left(\mb w^{\T}\mb z_{t}\right)^{2} =\sum_{\ell=0}^{L-1}\sum_{t\in\mc T_{\ell}}\left(\mb w^{\T}\mb z_{t}\right)^{2}\,.
\end{align*}
To find a lower bound for $\sum_{t\in\mc T_{\ell}}\left(\mb w^{\T}\mb z_{t}\right)^{2}$, the strategy
is to approximate this summation by its corresponding restarted version. Aggregating
the obtained bounds for all $\ell=0,\dotsc,L-1$ then yields the desired
lower bound for $\mb w^{\T}\mb{\varSigma}\mb w$.

By the Cauchy-Schwarz inequality we have
\begin{align*}
\left(\mb w^{\T}\mb z_{t}\right)^{2}+\left(\mb w^{\T}\left(\mb z_{t}-\mb z_{t}^{\left(\ell\right)}\right)\right)^{2} & \ge\frac{1}{2}\left(\mb w^{\T}\mb z_{t}^{\left(\ell\right)}\right)^{2}\,.
\end{align*}
Summing over $t\in\mc T_{\ell}$ and rearranging the terms then yields
\begin{align*}
\sum_{t\in\mc T_{\ell}}\left(\mb w^{\T}\mb z_{t}\right)^{2} & \ge\frac{1}{2}\underbrace{\sum_{t\in\mc T_{\ell}}\left(\mb w^{\T}\mb z_{t}^{\left(\ell\right)}\right)^{2}}_{\defeq S_{\ell}\left(\mb w\right)}-\underbrace{\sum_{t\in\mc T_{\ell}}\left(\mb w^{\T}\left(\mb z_{t}-\mb z_{t}^{\left(\ell\right)}\right)\right)^{2}}_{\defeq\tilde{S}_{\ell}\left(\mb w\right)}\,.
\end{align*}
Observe that the term $S_{\ell}\left(\mb w\right)$ is a sum of independent
random quadratic functions. Therefore, deriving a uniform lower bound
for $S_{\ell}\left(\mb w\right)$ is amenable to standard techniques.
We also need to establish a uniform upper bound for the term $\tilde{S}_{\ell}\left(\mb w\right)$
for which we leverage the contraction assumption.

Denote the matrix $\mb{M}$ with its $i$th column replaced by the zero vector as ${\mb M}_{\backslash i}$. The following lemma, whose proof is relegated to the appendix, provides a uniform lower bound on $\sum_{\ell=0}^{L-1}S_\ell(\mb w)$. The proof for this lemma is also provided in the appendix.

\begin{lem}[uniform lower bound for $S_{\ell}\left(\protect\mb w\right)$] \label{lem:Sl-lower-bound}
    With probability $\ge 1-\delta$, for all $\mb w\in\mbb S^{n+p-1}$ we have
 \begin{align*}
     \sum_{\ell=0}^{L-1} S_{\ell}(\mb w) & \ge \theta^2 L|\mc T_\ell|\left(\frac{0.1}{\max\{\eta,3\}}-\sqrt{\frac{2(n+p)\log\frac{eT/L}{(n+p)}+\log\frac{4L}{\delta}}{|\mc T_\ell|}}\right)\,,
 \end{align*}
 where $\theta$ is defined as in \eqref{eq:theta*}.
\end{lem}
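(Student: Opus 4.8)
The plan is to exploit the two structural features already set up in the proof: within a fixed residue class $\mc T_\ell$ the restarted vectors $\mb z_t^{(\ell)}$ from \eqref{eq:restarted-zt} depend on disjoint windows of the input sequence, so $S_\ell(\mb w)=\sum_{t\in\mc T_\ell}(\mb w^\T\mb z_t^{(\ell)})^2$ is a sum of \emph{independent} random quadratic forms, and each summand admits a small-ball lower bound at scale $\theta$. Concretely, I would first establish a pointwise small-ball estimate: for every fixed $\mb w\in\mbb S^{n+p-1}$ and every $t\in\mc T_\ell$,
\[\P\!\left(|\mb w^\T\mb z_t^{(\ell)}|\ge\theta\right)\ge\frac{0.1}{\max\{\eta,3\}}\,.\]
Writing $\mb w=\bmx{\mb a\\ \mb b}$ with $\mb a\in\mbb R^n$ and $\mb b\in\mbb R^p$, we have $\mb w^\T\mb z_t^{(\ell)}=\mb a^\T\mb x_t^{(\ell)}+\beta\,\mb b^\T\mb u_t$, and since $\mb x_t^{(\ell)}$ is a function of the inputs preceding $\mb u_t$, the two terms are independent and $\mb u_t$ is zero-mean. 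Consequently $\E(\mb w^\T\mb z_t^{(\ell)})^2=\E(\mb a^\T\mb x_t^{(\ell)})^2+\beta^2\norm{\mb b}_2^2$: the input block alone lower-bounds the summand when $\mb w$ is input-heavy, while the state block must carry the bound when $\mb w$ is state-heavy. This dichotomy is precisely the source of the $\min\{\beta,\cdot\}$ in \eqref{eq:theta*}.

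For the state block I would fix a coordinate $i$ and use the coordinate-wise symmetry of the input (Assumption \ref{asm:regularity-u}) to symmetrize $\nabla f$ about a base point built from the other coordinates. Combining this with the local approximate linearity \eqref{eq:local-approx-linearity} and the bound $\norm{F(\cdot)\mb y}_2\ge(\lambda-\varepsilon)\norm{\mb y}_2$ of Lemma \ref{lem:bound-F}, the linearized signal is a functional $\inp{\mb c,\mb u}$ whose coefficient obeys $\norm{\mb c}_2\ge(\lambda-\varepsilon)\lambda_{\min}^{1/2}({\tg{\mb B}}_{\backslash i}{\tg{\mb B}}_{\backslash i}^\T)\norm{\mb a}_2$, so the complementary columns of $\tg{\mb B}$ set the signal strength; meanwhile the linearization error along $\mb a$, after truncating the discarded coordinate at the sub-$\psi_\alpha$ scale $K\log^{1/\alpha}(10\max\{\eta,3\})$, is at most $\varepsilon K\log^{1/\alpha}(10\max\{\eta,3\})\norm{\tg{\mb B}}_{1\to 2}$. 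These are exactly the two terms of \eqref{eq:theta*}, and the well-spread hypothesis $\norm{\tg{\mb B}}_{1\to 2}/\norm{\tg{\mb B}}_\F=O(p^{-1/2})$ is what prevents the subtracted error from swamping the signal. I would then pass from the second moment to the small-ball probability via Paley--Zygmund, using the directional fourth-moment bound of Lemma \ref{lem:concatenated-moments} (its second part, which is where $\max\{\eta,3\}$ enters) to bound $\E(\mb w^\T\mb z_t^{(\ell)})^4$ by $\max\{\eta,3\}$ times the square of the second moment; this produces the small-ball constant $0.1/\max\{\eta,3\}$.

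Given the pointwise estimate, I would bound $S_\ell(\mb w)\ge\theta^2\sum_{t\in\mc T_\ell}\bbone\{|\mb w^\T\mb z_t^{(\ell)}|\ge\theta\}$ and control the count of large terms. For fixed $\mb w$ this count is a sum of independent indicators with mean at least $\tfrac{0.1}{\max\{\eta,3\}}|\mc T_\ell|$, so a multiplicative Chernoff bound gives concentration. To make this uniform over $\mb w\in\mbb S^{n+p-1}$ I would run a covering/empirical-process argument on the sphere: the $(n+p)$-dimensional metric entropy yields the term $(n+p)\log\frac{eT/L}{n+p}$ (using $|\mc T_\ell|\asymp T/L$), and the desired confidence together with the union over the $L$ residue classes $\ell=0,\dots,L-1$ yields $\log\frac{4L}{\delta}$; together these give the deviation $\sqrt{\frac{2(n+p)\log\frac{eT/L}{n+p}+\log\frac{4L}{\delta}}{|\mc T_\ell|}}$ subtracted inside the lemma. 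Summing the uniform bounds over the $L$ classes gives $\sum_{\ell}S_\ell(\mb w)\ge\theta^2 L|\mc T_\ell|\big(\tfrac{0.1}{\max\{\eta,3\}}-\sqrt{\,\cdots\,}\big)$, as stated.

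The main obstacle, I expect, is the pointwise small-ball estimate for the state block; everything downstream is fairly standard. Two difficulties compound there. First, the nonlinear-linearization error along $\mb a$ must be controlled at the \emph{small} scale $\norm{\tg{\mb B}}_{1\to 2}$ rather than the crude $\varepsilon\norm{\tg{\mb B}}_\F$ that a blunt application of \eqref{eq:local-approx-linearity} to the full input would give; this is what forces the leave-one-column-out device, so that the signal is read off from ${\tg{\mb B}}_{\backslash i}$ while the discarded coordinate, after truncation at the $\psi_\alpha$ scale, carries the error. Second, because the input is only $\psi_\alpha$ rather than Gaussian, I cannot invoke Gaussian Lipschitz concentration as in \citep{Oymak2019-Stochastic}; the anti-concentration must be driven purely by symmetry and the fourth-moment bound of Lemma \ref{lem:concatenated-moments}, and the uniformization over the sphere must be done for the truncated indicators, which are far better behaved than the raw heavy-tailed quadratics. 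Arranging the constants so that the leading small-ball constant remains a genuine positive number $0.1/\max\{\eta,3\}$ after subtracting the error is the delicate part.
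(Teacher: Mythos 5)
Your proposal is correct and follows essentially the same route as the paper's own proof: the same truncation of $S_\ell(\mb w)$ to indicators at level $\theta$, the same pointwise small-ball estimate $\P(|\mb w^\T\mb z_t^{(\ell)}|\ge\theta)\ge 0.1/\max\{\eta,3\}$ obtained via coordinate-flip symmetrization, the leave-one-column-out device with truncation at the $\psi_\alpha$ scale, and Paley--Zygmund with the fourth-moment bound (the paper's Lemmas \ref{lem:lower-bound-P} and \ref{lem:Apply-PZ}), and the same uniformization over $\mbb S^{n+p-1}$ — the paper invokes classical VC bounds for the indicator class, which is precisely the empirical-process argument you sketch — followed by a union bound over the $L$ residue classes.
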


Furthermore, we have the following lemma that establishes a uniform upperbound for $\sum_{\ell = 0}^{L-1}\tilde{S}_\ell(\mb w)$.

\begin{lem}[uniform upper bound for $\tilde{S}_{\ell}\left(\protect\mb w\right)$]
\label{lem:curlySl upper bound}
    Suppose that  $\mu \defeq p^{1/2}\norm{\mb B_\star}_{1\to 2}/\norm{\mb B_\star}_\F=O(1)$ and let $\epsilon > 0$ be a parameter. If for a certain absolute constant $c>0$, we have
    \begin{align}
            L  \ge 1+\frac{\log\left(\frac{c^2T}{\epsilon}\log\left(\frac{2(T-1)(p+1)}{\delta}\right)\left(\frac{\varLambda\norm{\tg{\mb B}}_{\F}}{1-\varLambda\norm{\tg{\mb A}}}\right)^2\right)}{\log\frac{1}{\varLambda\norm{\tg{\mb A}}}}\,, \label{eq:L}
    \end{align}
    then with probability $\ge 1-\delta$, we can guarantee
    \[
       \sum_{\ell=0}^{L-1}\tilde{S}_\ell(\mb{w}) \le \epsilon\,.
    \]
\end{lem}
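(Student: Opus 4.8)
\textbf{The plan} is to exploit the contraction $\varLambda\norm{\tg{\mb A}}<1$ to show that each restarted sample $\mb z_t^{(\ell)}$ is exponentially close (in the stride $L$) to the true sample $\mb z_t$, so that $\tilde S_\ell(\mb w)$ becomes negligible once $L$ is logarithmically large. First I would reduce the bound to the state variables alone. Since the input block $\beta\,\mb u_t$ is \emph{identical} in $\mb z_t$ and $\mb z_t^{(\ell)}$ by \eqref{eq:restarted-zt}, the difference $\mb z_t-\mb z_t^{(\ell)}$ is supported on its first $n$ coordinates and equals $\bmx{\mb x_t-\mb x_t^{(\ell)}\\ \mb 0}$. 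Writing $\mb w=\bmx{\mb w_1\\ \mb w_2}$ and applying Cauchy--Schwarz with $\norm{\mb w_1}_2\le 1$ gives
\[(\mb w^\T(\mb z_t-\mb z_t^{(\ell)}))^2\le\norm{\mb x_t-\mb x_t^{(\ell)}}_2^2,\]
a bound that is \emph{uniform} over $\mb w\in\mbb S^{n+p-1}$. Hence it suffices to bound $\sum_{\ell=0}^{L-1}\sum_{t\in\mc T_\ell}\norm{\mb x_t-\mb x_t^{(\ell)}}_2^2$, and the resulting high-probability event will not depend on $\mb w$.

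Next I would quantify the contraction. By $\varLambda$-smoothness of $f$, the one-step map $\mb x\mapsto\nabla f(\tg{\mb A}\mb x+\tg{\mb B}\mb u)$ is $\gamma$-Lipschitz in $\mb x$ with $\gamma\defeq\varLambda\norm{\tg{\mb A}}<1$, uniformly in $\mb u$. For $t\in\mc T_\ell$ the restarted trajectory is reset to $\mb 0$ at time $t-L+1$ and thereafter obeys the same recursion, driven by the same inputs, as the true trajectory; iterating the contraction over the $L-1$ intervening steps yields
\[\norm{\mb x_t-\mb x_t^{(\ell)}}_2\le\gamma^{L-1}\norm{\mb x_{t-L+1}}_2.\]
Because $\bigcup_\ell\mc T_\ell=\{L,\dotsc,T-1\}$, summing over $\ell$ and $t$ re-indexes the right-hand side through $s=t-L+1\in\{1,\dotsc,T-L\}$, so that
\[\sum_{\ell=0}^{L-1}\tilde S_\ell(\mb w)\le\gamma^{2(L-1)}\sum_{s=1}^{T-L}\norm{\mb x_s}_2^2.\]

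It then remains to control $\sum_s\norm{\mb x_s}_2^2$ with high probability. Using $\nabla f(\mb 0)=\mb 0$ together with the same Lipschitz estimate and unrolling the recursion gives $\norm{\mb x_s}_2\le\varLambda\sum_{j=0}^{s-1}\gamma^j\norm{\tg{\mb B}\mb u_{s-1-j}}_2\le\frac{\varLambda}{1-\gamma}\max_{0\le r\le T-2}\norm{\tg{\mb B}\mb u_r}_2$. The geometric series already pins down the correct scale: by isotropy $\E\norm{\tg{\mb B}\mb u}_2^2=\norm{\tg{\mb B}}_\F^2$, so Minkowski's inequality gives $(\E\norm{\mb x_s}_2^2)^{1/2}\le\varLambda\norm{\tg{\mb B}}_\F/(1-\gamma)$, which is precisely the factor $(\varLambda\norm{\tg{\mb B}}_\F/(1-\varLambda\norm{\tg{\mb A}}))^2$ appearing in \eqref{eq:L}. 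To pass to a high-probability statement I would invoke the directional Orlicz assumption \eqref{eq:Orlicz-psi-alpha}: each $\inp{\mb b_k,\mb u}$, with $\mb b_k$ a right singular vector of $\tg{\mb B}$, is $\psi_\alpha$ with norm $\le K$, and $\norm{\tg{\mb B}\mb u}_2^2\le\norm{\tg{\mb B}}_\F^2\max_k\inp{\mb b_k,\mb u}^2$; a union bound over the at most $p$ nonzero singular directions and the $T-1$ time indices then produces, with probability $\ge 1-\delta$, an estimate of the form $\max_r\norm{\tg{\mb B}\mb u_r}_2^2\lesssim\norm{\tg{\mb B}}_\F^2\log\frac{2(T-1)(p+1)}{\delta}$, where the well-spread hypothesis $\mu=O(1)$ is what permits phrasing the tail in terms of $\norm{\tg{\mb B}}_\F$ rather than a less favourable column-dependent quantity. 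Substituting back gives $\sum_{\ell}\tilde S_\ell(\mb w)\lesssim\gamma^{2(L-1)}\,T\,(\varLambda\norm{\tg{\mb B}}_\F/(1-\gamma))^2\log\frac{2(T-1)(p+1)}{\delta}$, and choosing $L$ as in \eqref{eq:L} forces $\gamma^{L-1}$ (hence, with room to spare, $\gamma^{2(L-1)}$) to be small enough that this is at most $\epsilon$, the constant $c$ absorbing the residual numerical factors.

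The main obstacle I anticipate is the honest high-probability control of the state energies $\sum_s\norm{\mb x_s}_2^2$ under only a heavy-tailed, directional $\psi_\alpha$ input: extracting a clean dependence on $\norm{\tg{\mb B}}_\F$ (via the well-spread assumption) and the correct logarithmic dependence on $T$, $p$, and $\delta$. By contrast, the contraction/restart reduction is mechanical once the index bookkeeping — the reset at time $t-L+1$ leaving exactly $L-1$ contracting steps, and the re-indexing $\bigcup_\ell\mc T_\ell=\{L,\dotsc,T-1\}$ — is set up correctly.
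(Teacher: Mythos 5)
Your overall strategy is the paper's strategy: reduce the difference $\mb z_t-\mb z_t^{(\ell)}$ to the state block, iterate the one-step contraction $\gamma=\varLambda\norm{\tg{\mb A}}<1$ from the reset time $t-L+1$, unroll the true recursion to get $\max_{s}\norm{\mb x_s}_2\le\frac{\varLambda}{1-\varLambda\norm{\tg{\mb A}}}\max_r\norm{\tg{\mb B}\mb u_r}_2$, control the input norms with high probability, and choose $L$ logarithmically to kill $\gamma^{2(L-1)}$. All of that bookkeeping is correct and matches the paper (the paper even has a harmless off-by-one, writing $\norm{\mb x_{t-L}}_2$ where you write $\norm{\mb x_{t-L+1}}_2$). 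The gap is in the one step you yourself flagged as the main obstacle: the tail bound for $\max_r\norm{\tg{\mb B}\mb u_r}_2$. Your bound $\norm{\tg{\mb B}\mb u}_2^2\le\norm{\tg{\mb B}}_\F^2\max_k\inp{\mb b_k,\mb u}^2$ followed by a union bound over singular directions and time indices gives, under \eqref{eq:Orlicz-psi-alpha}, the tail $\P\left(|\inp{\mb b_k,\mb u}|\ge K\log^{1/\alpha}(2/\gamma)\right)\le\gamma$, hence
\begin{equation*}
\max_{r}\norm{\tg{\mb B}\mb u_r}_2^2\;\lesssim\;K^2\norm{\tg{\mb B}}_\F^2\,\log^{2/\alpha}\left(\frac{2(T-1)p}{\delta}\right)\,,
\end{equation*}
i.e., the logarithm appears to the power $2/\alpha$, not to the first power as you claim. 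For $\alpha\ge 2$ this is fine, but Assumption 2 only requires $\alpha\ge1$, and for $\alpha\in[1,2)$ the extra factor $\log^{2/\alpha-1}(\cdot)$ grows with $T$, $p$, and $\delta^{-1}$, so it cannot be absorbed into the absolute constant $c$; the condition \eqref{eq:L} as stated (with the first power of the logarithm) is therefore not established by your argument in the heavy-tailed regime that is the paper's main point of interest.

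Relatedly, your remark that the hypothesis $\mu=O(1)$ ``is what permits phrasing the tail in terms of $\norm{\tg{\mb B}}_\F$'' does not describe your own argument: the singular-direction bound uses no coherence at all, and that is precisely why it is lossy — it discards the averaging of $\inp{\mb b_k,\mb u}^2$ across the $p$ directions and pays the worst-case single-direction $\psi_\alpha$ tail. The paper instead invokes a vector-valued Bernstein inequality (Lemma \ref{lem:Bu-Bernstein}, a special case of \citealp[Corollary 2.1]{Koltchinskii2011-Oracle}), applied to $\tg{\mb B}\mb u=\sum_i u_i\mb b_i$ as a sum of independent terms; there the coherence condition $\mu=O(1)$ (with $p$ large) is exactly what makes the sub-Gaussian term $\log^{1/2}\left(2\gamma^{-1}(p+1)\right)\norm{\tg{\mb B}}_\F$ dominate the heavy-tailed term for every $\alpha\ge1$. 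To repair your proof, replace the union bound over singular directions by an application of Lemma \ref{lem:Bu-Bernstein} (union-bounded over the $T-1$ time indices); the rest of your argument then goes through verbatim and recovers the lemma as stated.
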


Consquently, under \eqref{eq:L-main} and \eqref{eq:T-main}, it follows from Lemmas \ref{lem:Sl-lower-bound} and \ref{lem:curlySl upper bound} that
\begin{align*}
    \mb{w}^\T \mb{\varSigma}\mb{w} &\gtrsim \frac{\theta^2}{\max\{\eta,3\}} T\,.
\end{align*}
holds uniformly for all $\mb w\in \mbb S^{n+p-1}$ with probability $\ge 1- \delta$.
\end{proof}

\section*{Acknowledgements}
This work was supported in part by the Semiconductor Research Corporation (SRC) and DARPA.

{\small
\bibliography{references}
}

\appendix
\section{Proofs for technical lemmas}

\begin{proof}[\textbf{Proof of Lemma \ref{lem:Sl-lower-bound}}]
For each $\ell=0,\dotsc,L-1$, the vectors $\mb z_{t}^{\left(\ell\right)}$
with $t\in\mc T_{\ell}$ are independent and identically distributed.
Let $\theta>0$ be a parameter to
be specified later. Using a simple truncation we can write

\begin{align*}
\sum_{t\in\mc T_{\ell}}\left(\mb w^{\T}\mb z_{t}^{\left(\ell\right)}\right)^{2} & \ge \theta^2\sum_{t\in\mc T_{\ell}}\bbone\left(\left|\mb w^{\T}\mb z_{t}^{\left(\ell\right)}\right|\ge\theta\right)\,.
\end{align*}
To bound the right-hands side of the inequality above uniformly with respect to the set of binary functions
\begin{align*}
\mc F_{\ell} & \defeq\left\{ \mb z\mapsto\bbone\left(\left|\mb w^{\T}\mb z\right|\ge\theta\right)\,:\,\mb w\in\mbb S^{n+p-1}\right\} \,,
\end{align*}
we can resort to classic VC bounds (\citealp{Vapnik1971-Uniform};  \citealp[see also][chapters 13 \& 14]{Devroye2013-Probabilistic}). Particularly, because the VC dimension of $\mc{F}_\ell$ is no more than $2\left(n+p\right)$, with probability $\ge 1-\delta/L$ we have
\begin{align*}
 \frac{1}{\left|\mc T_{\ell}\right|}\sum_{t\in\mc T_{\ell}}\bbone\left(\left|\mb w^{\T}\mb z_{t}^{\left(\ell\right)}\right|\ge\theta\right) & \ge\P\left(\left|\mb w^{\T}\mb z_{t}^{\left(\ell\right)}\right|\ge\theta\right)-\sqrt{\frac{2(n+p)\log\frac{e\left|\mc T_{\ell}\right|}{n+p}+\log\frac{4L}{\delta}}{\left|\mc T_{\ell}\right|}}\,,
\end{align*}
 for all $\mb w\in\mbb S^{n+p-1}$.
 It only remains to find appropriate lower bounds for the probability in the summation. Lemma \ref{lem:lower-bound-P} below provides the needed lower bound.

 Taking the union bound over $\ell$ then shows that with probability $\ge 1-\delta$ we obtain
 \begin{align*}
    \sum_{\ell=0}^{L-1}\frac{1}{\left|\mc T_{\ell}\right|}\sum_{t\in\mc T_{\ell}}\bbone\left(\left|\mb w^{\T}\mb z_{t}^{\left(\ell\right)}\right|\ge\theta\right) & \ge L\left(\frac{0.1}{\max\{\eta,3\}}-\sqrt{\frac{2(n+p)\log\frac{e\left|\mc T_{\ell}\right|}{n+p}+\log\frac{4L}{\delta}}{\left|\mc T_{\ell}\right|}}\right)\,,
 \end{align*}
 which yields the desired bound.
 \end{proof}

\begin{proof}[\textbf{Proof of Lemma \ref{lem:curlySl upper bound}}]
Recall the definition of $\mb{z}_t^{(\ell)}$ in \eqref{eq:restarted-zt}. For every $t\in\mc T_{\ell}$ and $\mb w\in\mbb S^{n+p-1}$ we have
\begin{align*}
\left(\mb w^{\T}\left(\mb z_{t}-\mb z_{t}^{\left(\ell\right)}\right)\right)^{2} & \le\norm{\mb z_{t}-\mb z_{t}^{\left(\ell\right)}}_{2}^{2}\\
 & =\norm{\nabla f\left(\tg{\mb C}\mb z_{t-1}\right)-\nabla f\left(\tg{\mb C}\mb z_{t-1}^{\left(\ell\right)}\right)}_{2}^{2}\,.
\end{align*}
Furthermore, we can write
\begin{align*}
\norm{\nabla f\left(\tg{\mb C}\mb z_{t-1}\right)-\nabla f\left(\tg{\mb C}\mb z_{t-1}^{\left(\ell\right)}\right)}_{2}^{2} & \le\varLambda^{2}\norm{\tg{\mb C}\left(\mb z_{t-1}-\mb z_{t-1}^{\left(\ell\right)}\right)}_{2}^{2}\\
 & =\varLambda^{2}\norm{\tg{\mb A}\left(\nabla f\left(\tg{\mb C}\mb z_{t-2}\right)-\nabla f\left(\tg{\mb C}\mb z_{t-2}^{\left(\ell\right)}\right)\right)}_{2}^{2}\\
 & \le\left(\varLambda\norm{\tg{\mb A}}\right)^{2}\norm{\left(\nabla f\left(\tg{\mb C}\mb z_{t-2}\right)-\nabla f\left(\tg{\mb C}\mb z_{t-2}^{\left(\ell\right)}\right)\right)}_{2}^{2}\,.
\end{align*}
 Using the above inequality recursively yields
\begin{align*}
\norm{\nabla f\left(\tg{\mb C}\mb z_{t-1}\right)-\nabla f\left(\tg{\mb C}\mb z_{t-1}^{\left(\ell\right)}\right)}_{2}^{2} & \le\left(\varLambda\norm{\tg{\mb A}}\right)^{2\left(L-2\right)}\norm{\nabla f\left(\tg{\mb C}\mb z_{t-L+1}\right)-\nabla f\left(\tg{\mb C}\mb z_{t-L+1}^{\left(\ell\right)}\right)}_{2}^{2}\\
 & \le\left(\varLambda\norm{\tg{\mb A}}\right)^{2\left(L-1\right)}\norm{\mb x_{t-L}}_{2}^{2}\,.
\end{align*}
 Therefore, we deduce that
\begin{align}
\left(\mb w^{\T}\left(\mb z_{t}-\mb z_{t}^{\left(\ell\right)}\right)\right)^{2} & \le\left(\varLambda\norm{\tg{\mb A}}\right)^{2\left(L-1\right)}\norm{\mb x_{t-L}}_{2}^{2}\,.\label{eq:deviation-t}
\end{align}
 Furthermore, for any time index $s\ge1$ we have
\begin{align*}
\norm{\mb x_{s}}_{2} & \le\varLambda\norm{\tg{\mb A}\mb x_{s-1}+\tg{\mb B}\mb u_{s-1}}_{2}\\
 & \le\varLambda\norm{\tg{\mb A}}\norm{\mb x_{s-1}}_{2}+\varLambda\norm{\tg{\mb B}\mb u_{s-1}}_{2}\,.
\end{align*}
 Therefore, we can write
\begin{align*}
\max_{1\le s\le T-1}\norm{\mb x_{s}}_{2} & \le\varLambda\norm{\tg{\mb A}}\max_{1\le s\le T-1}\norm{\mb x_{s-1}}_{2}+\varLambda\max_{1\le s\le T-1}\norm{\tg{\mb B}\mb u_{s-1}}_{2}\,,
\end{align*}
which implies
\begin{align*}
\max_{1\le s\le T-1}\norm{\mb x_{s}}_{2} & \le\frac{\varLambda}{1-\varLambda\norm{\tg{\mb A}}}\max_{1\le s\le T-1}\norm{\tg{\mb B}\mb u_{s-1}}_{2}\,.
\end{align*}

 Since $\mu = p^{1/2}\norm{\mb B_\star}_{1\to 2}/\norm{\mb B_\star}_\F=O(1)$ by assumption, using the matrix Bernstein inequality, stated in Lemma \ref{lem:Bu-Bernstein} below, for each $s=1,\dotsc,T-1$, with probability $\ge 1- \delta/(T-1)$ we have
  \begin{align*}
     \norm{\mb B_\star \mb u_{s-1}}_2 & \le c\norm{\mb B_\star}_\F\log^{\frac{1}{2}}\left(\frac{2(T-1)(p+1)}{\delta}\right)\,,
 \end{align*}
 for some absolute constant $c>0$. It then follows from a simple union bound that
 \begin{align*}
     \max_{1\le s\le T-1}\norm{\mb B_\star \mb u_{s-1}}_2 &\le c\norm{\mb B_\star}_\F\log^{\frac{1}{2}}\left(\frac{2(T-1)(p+1)}{\delta}\right)\,,
 \end{align*}
 holds with probability $\ge 1-\delta$. Consequently,
\begin{align*}
\max_{1\le s\le T-1}\norm{\mb x_{s}}_{2} & \le c\log^{\frac{1}{2}}\left(\frac{2(T-1)(p+1)}{\delta}\right)\frac{\varLambda\norm{\tg{\mb B}}_{\F}}{1-\varLambda\norm{\tg{\mb A}}}\,,
\end{align*}
holds with probability $\ge1-\delta$. Under the same event and in
view of \eqref{eq:deviation-t} we have
\begin{align*}
\left(\mb w^{\T}\left(\mb z_{t}-\mb z_{t}^{\left(\ell\right)}\right)\right)^{2} & \le\left(\varLambda\norm{\tg{\mb A}}\right)^{2\left(L-1\right)}c^2\log\left(\frac{2(T-1)(p+1)}{\delta}\right)\left(\frac{\varLambda\norm{\tg{\mb B}}_{\F}}{1-\varLambda\norm{\tg{\mb A}}}\right)^2\,,
\end{align*}
 for all $\mb w\in\mbb S^{n+p-1}$, $0\le\ell\le L-1$, and $t\in\mc T_{\ell}$.
Summation over $t\in\mc T_{\ell}$ then yields
\begin{align*}
\tilde{S}_{\ell}\left(\mb w\right) & =\sum_{t\in\mc T_{\ell}}\left(\mb w^{\T}\left(\mb z_{t}-\mb z_{t}^{\left(\ell\right)}\right)\right)^{2} \\
& \le\frac{T}{L}\left(\varLambda\norm{\tg{\mb A}}\right)^{2\left(L-1\right)}c^2\log\left(\frac{2(T-1)(p+1)}{\delta}\right)\left(\frac{\varLambda\norm{\tg{\mb B}}_{\F}}{1-\varLambda\norm{\tg{\mb A}}}\right)^2\,.
\end{align*}
 Therefore, for $\epsilon>0$ if
\begin{align*}
L & \ge 1+\frac{\log\left(\frac{c^2T}{\epsilon}\log\left(\frac{2(T-1)(p+1)}{\delta}\right)\left(\frac{\varLambda\norm{\tg{\mb B}}_{\F}}{1-\varLambda\norm{\tg{\mb A}}}\right)^2\right)}{\log\frac{1}{\varLambda\norm{\tg{\mb A}}}}\,,
\end{align*}
then with probability $\ge1-\delta$ for all $\mb w\in\mbb S^{n+p-1}$
we have
\begin{align*}
\sum_{\ell=0}^{L-1}\tilde{S}_{\ell}\left(\mb w\right) & \le\epsilon\,.
\end{align*}
\end{proof}

\section{Auxiliary lemmas}
We use a special case of a matrix Bernstein inequality \citep[Corollary 2.1]{Koltchinskii2011-Oracle}. For reference, the following lemma states the special inequality we need; we omit the proof and refer the reader to  \citep{Koltchinskii2011-Oracle} for the general Bernstein inequality.

\begin{lem}\label{lem:Bu-Bernstein}
    Suppose that $\mb{u}$ obeys the Assumption \ref{asm:regularity-u}. Furthermore, define a coherence parameter for $\tg{\mb B}$ as $\mu \defeq p^{1/2}\norm{\tg{\mb B}}_{1\to 2}/\norm{\tg{\mb B}}_\F$.  Then, for some absolute constant $c>0$, and any $\gamma \in (0,1]$, the bound
     \begin{align*}
        & \norm{\tg{\mb{B}}\mb{u}}_2 \\
        & \le \max\left\lbrace c^{\frac{1}{2}}\log^{\frac{1}{2}}\left(2\gamma^{-1}(p+1)\right),\ c\max\{K,2\}\mu\,\log^{\frac{1}{\alpha}}\left(\max\{K,2\}\mu\right)\, \frac{\log\left(2\gamma^{-1}(p+1)\right)}{p^{1/2}}\right\rbrace \norm{\tg{\mb{B}}}_{\F}\,,
     \end{align*}
holds with probability $\ge 1-\gamma$.
In particular, if $\mu=O(1)$, meaning that the weight of $\tg{\mb B}$ is distributed almost evenly across its columns, and $p$ is sufficiently large, the bound stated above effectively reduces to
\begin{align*}
    \norm{\tg{\mb B}\mb{u}}_2 \le c\norm{\tg{\mb B}}_\F \log^{\frac{1}{2}}\left(2\gamma^{-1}(p+1)\right)\,,
\end{align*}
for some absolute constant $c>0$.
\end{lem}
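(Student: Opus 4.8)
The statement is meant to be a direct application of the matrix Bernstein inequality of \citep[Corollary 2.1]{Koltchinskii2011-Oracle}, so the plan is to exhibit $\tg{\mb B}\mb u$ as a sum of independent, mean-zero random matrices and then simply read off the two quantities it requires: a variance proxy and a uniform Orlicz bound on the summands. First I would decompose $\tg{\mb B}=\bmx{\mb b_1 & \cdots & \mb b_p}$ into its columns, so that $\tg{\mb B}\mb u=\sum_{i=1}^p u_i\mb b_i$. By Assumption \ref{asm:regularity-u} the coordinates $u_i=\inp{\mb e_i,\mb u}$ are independent and each is symmetric, hence mean-zero; thus the summands $u_i\mb b_i$ are independent, mean-zero random vectors in $\mbb R^n$.

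Next I would pass to the Hermitian dilation $X_i=u_i\bmx{0 & \mb b_i^\T \\ \mb b_i & 0}$, which are again independent and mean-zero and satisfy $\norm{\sum_i X_i}=\norm{\tg{\mb B}\mb u}_2$. The variance proxy is $\norm{\sum_i\E X_i^2}$: since $\E u_i^2=1$ by isotropy, $\E X_i^2=\bmx{\norm{\mb b_i}_2^2 & \\ & \mb b_i\mb b_i^\T}$, so $\sum_i\E X_i^2=\bmx{\norm{\tg{\mb B}}_\F^2 & \\ & \tg{\mb B}\tg{\mb B}^\T}$, whose norm is $\max\{\norm{\tg{\mb B}}_\F^2,\norm{\tg{\mb B}}^2\}=\norm{\tg{\mb B}}_\F^2$ (consistent with $\E\norm{\tg{\mb B}\mb u}_2^2=\tr(\tg{\mb B}^\T\tg{\mb B})=\norm{\tg{\mb B}}_\F^2$). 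For the tail parameter I would apply \eqref{eq:Orlicz-psi-alpha} with $\mb h=\mb e_i$ to get $\norm{u_i}_{\psi_\alpha}\le K$, whence the $\psi_\alpha$ norm of $\norm{X_i}=|u_i|\norm{\mb b_i}_2$ is at most $K\norm{\tg{\mb B}}_{1\to 2}$. The ambient dimension entering the corollary is the size of the $X_i$, which can be reduced to $\mr{rank}(\tg{\mb B})+1\le p+1$ by working within the range of $\tg{\mb B}$; this is what produces the $\log(2\gamma^{-1}(p+1))$ factor.

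With $\sigma=\norm{\tg{\mb B}}_\F$, the summand bound $U=K\norm{\tg{\mb B}}_{1\to 2}$, and dimension $p+1$, invoking \citep[Corollary 2.1]{Koltchinskii2011-Oracle} at confidence $1-\gamma$ yields the maximum of a sub-Gaussian term of order $\sigma\sqrt{\log(2\gamma^{-1}(p+1))}$ and a sub-exponential term of order $U\log^{1/\alpha}(\cdot)\,\log(2\gamma^{-1}(p+1))$. I would then rewrite $U=K\mu\,\norm{\tg{\mb B}}_\F/\sqrt p$ through the coherence $\mu=p^{1/2}\norm{\tg{\mb B}}_{1\to 2}/\norm{\tg{\mb B}}_\F$ and absorb the truncation scale into $\log^{1/\alpha}(\max\{K,2\}\mu)$, which reproduces the displayed two-term bound. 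The final ``in particular'' claim then follows because, when $\mu=O(1)$ and $p$ is large, the factor $\log(2\gamma^{-1}(p+1))/\sqrt p$ renders the second term negligible against $\sqrt{\log(2\gamma^{-1}(p+1))}$, so the maximum is attained by the first term.

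The only genuinely delicate point --- and the reason the authors defer to the cited corollary --- is the passage through the $\psi_\alpha$ regime: because the summands are unbounded, the matrix Bernstein inequality must be applied after truncating the $X_i$, and it is the truncation level, calibrated against the Orlicz norm, that generates the $\log^{1/\alpha}$ factor and the precise form $\max\{K,2\}\mu$ of its argument. Carrying out this bookkeeping, together with matching Koltchinskii's absolute constants and the max-versus-sum form of the two regimes, is the bulk of the effort; once the corollary is granted, everything else is the routine parameter computation sketched above.
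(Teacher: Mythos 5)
Your proposal is correct and follows exactly the route the paper intends: the paper in fact omits any proof of this lemma, deferring entirely to \citep[Corollary 2.1]{Koltchinskii2011-Oracle}, and your reconstruction (column decomposition $\tg{\mb B}\mb u=\sum_i u_i\mb b_i$, Hermitian dilation, variance proxy $\norm{\tg{\mb B}}_\F^2$, summand Orlicz bound $K\norm{\tg{\mb B}}_{1\to 2}$, and the rewriting in terms of the coherence $\mu$) is precisely the bookkeeping needed to instantiate that corollary. The dimension-reduction remark (working in the range of $\tg{\mb B}$ to get the $p+1$ factor) and the observation that the sub-exponential term is dominated when $\mu=O(1)$ and $p$ is large both check out, so nothing essential is missing.
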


In general, the coherence parameter $\mu$ defined in Lemma \ref{lem:Bu-Bernstein} obeys $1\le \mu\le p^{1/2}$. However, we assume we operate in the scenario that $\mu =O(1)$ so that we apply the simpler bound stated in the lemma. Therefore, choosing $\gamma = 1/p$ and for a sufficiently large $p$ we have
 \begin{align*}
    &\hphantom{\ge} \P\left(\left|\mb{w}^\T   \bmx{
                                                    F(\tg{\mb{A}}\mb{x}^{(\ell)}_{t-1})\tg{\mb{B}}\mb{u}_{t-1}\\
                                                    \beta\,\mb{u}_t
                                                }
                            \right|- \varepsilon \norm{\tg{\mb{B}}\mb{u}_{t-1}}_2\ge \theta\right)\\
                            & \ge \P\left(\left|\mb{w}^\T   \bmx{
                                                    F(\tg{\mb{A}}\mb{x}^{(\ell)}_{t-1})\tg{\mb{B}}\mb{u}_{t-1}\\
                                                    \beta\,\mb{u}_t
                                                }
                            \right| \ge \theta + c\varepsilon \log^{\frac{1}{2}}\left(2p(p+1)\right)\norm{\tg{\mb{B}}}_\F\right) - \frac{1}{p}\,.
 \end{align*}
 for some absolute constant $c>0$.

 \begin{lem}[lower bound for the probabilities]\label{lem:lower-bound-P}
   With $\theta$ defined as in \eqref{eq:theta*},
 for each $\ell\in \{0,1,\dotsc,L-1\}$, and every $t\in\mc{T}_\ell$ we have
        \[\P\left(\left|\mb{w}^\T\mb{z}_t^{(\ell)}\right| \ge  \theta \right) \ge \frac{0.1}{\max\{\eta,3\}}\]
 \end{lem}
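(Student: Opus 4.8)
Fix $t\in\mc T_\ell$ and $\mb w=\bmx{\mb w_1\\\mb w_2}\in\mbb S^{n+p-1}$ (with $\mb w_1\in\mbb R^n$, $\mb w_2\in\mbb R^p$), and condition on $\mb x_{t-1}^{(\ell)}$. Since the restart falls on indices congruent to $\ell$ modulo $L$ while $t\equiv\ell$, the vector $\mb x_{t-1}^{(\ell)}$ is a function of $\mb u_{t-2},\mb u_{t-3},\dotsc$ alone, hence independent of the two fresh inputs $\mb u_{t-1},\mb u_t$ entering $\mb z_t^{(\ell)}$; in particular $F(\mb a)$ with $\mb a\defeq\tg{\mb A}\mb x_{t-1}^{(\ell)}$ is frozen. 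The plan is first to symmetrize: because $\mb u_{t-1}$ and $\mb u_t$ have independent symmetric coordinates, flipping both signs leaves the law of $\mb w^\T\mb z_t^{(\ell)}$ unchanged, and comparing the two sign patterns through \eqref{eq:local-approx-linearity} shows that the odd part of $\mb w^\T\mb z_t^{(\ell)}$ equals the linear statistic $W\defeq\mb w^\T\bmx{F(\mb a)\tg{\mb B}\mb u_{t-1}\\\beta\mb u_t}$ up to an additive term of size at most $\varepsilon\norm{\tg{\mb B}\mb u_{t-1}}_2$. A two-point inequality (using $Y\stackrel{d}{=}Y'$ and $|Y|+|Y'|\ge|Y-Y'|$ for the two sign choices) then gives
\[\P\left(\left|\mb w^\T\mb z_t^{(\ell)}\right|\ge\theta\right)\ge\tfrac12\,\P\left(|W|-\varepsilon\norm{\tg{\mb B}\mb u_{t-1}}_2\ge\theta\right)\,,\]
reducing the claim to exactly the quantity that the display preceding the lemma prepares.

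Next I would anti-concentrate $W$. Conditioned on $\mb a$ it is the linear form $\inp{\tg{\mb B}^\T F(\mb a)^\T\mb w_1,\mb u_{t-1}}+\beta\inp{\mb w_2,\mb u_t}$ in the i.i.d. inputs, so $\E W^2=\norm{\tg{\mb B}^\T F(\mb a)^\T\mb w_1}_2^2+\beta^2\norm{\mb w_2}_2^2$, while the second part of Lemma \ref{lem:concatenated-moments} gives $\E W^4\le\max\{\eta,3\}(\E W^2)^2$. The Paley--Zygmund inequality applied to $W^2$ at level $0.36$ then yields $\P\bigl(|W|\ge0.6\,(\E W^2)^{1/2}\bigr)\ge 0.64^2/\max\{\eta,3\}$. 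To make this useful I would lower bound $\E W^2$ uniformly in $\mb w$: Lemma \ref{lem:bound-F} gives $\norm{F(\mb a)^\T\mb w_1}_2\ge(\lambda-\varepsilon)\norm{\mb w_1}_2$ (the extreme singular values of $F(\mb a)$ and $F(\mb a)^\T$ coincide), and since deleting the $i$th column only shrinks $\norm{\tg{\mb B}^\T\mb y}_2$ one has $\norm{\tg{\mb B}^\T F(\mb a)^\T\mb w_1}_2^2\ge(\lambda-\varepsilon)^2\lambda_{\min}(\tg{\mb B}_{\backslash i}\tg{\mb B}_{\backslash i}^\T)\norm{\mb w_1}_2^2$. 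Combined with the $\beta^2\norm{\mb w_2}_2^2$ term and $\norm{\mb w_1}_2^2+\norm{\mb w_2}_2^2=1$, this produces $(\E W^2)^{1/2}\ge\min\{\beta,(\lambda-\varepsilon)\lambda_{\min}^{1/2}(\tg{\mb B}_{\backslash i}\tg{\mb B}_{\backslash i}^\T)\}$, which is precisely the $0.6\min_i\{\cdots\}$ term of $\theta$ in \eqref{eq:theta*}.

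The final and most delicate step is to pass from $|W|$ to $|W|-\varepsilon\norm{\tg{\mb B}\mb u_{t-1}}_2\ge\theta$ while charging the error only at the single-column scale $\varepsilon K\log^{1/\alpha}(10\max\{\eta,3\})\norm{\tg{\mb B}}_{1\to2}$ of \eqref{eq:theta*}, rather than the Frobenius-scale bound $\varepsilon\norm{\tg{\mb B}}_\F\approx p^{1/2}\varepsilon\norm{\tg{\mb B}}_{1\to2}$ that the matrix Bernstein inequality of Lemma \ref{lem:Bu-Bernstein} would supply. The mechanism I would try is to isolate the very column $i$ omitted from the spectral bound above: freezing $\mb u_{t-1}$ in the remaining coordinates and symmetrizing only $u_{t-1,i}$ confines the linearization error to $\varepsilon\,|u_{t-1,i}|\,\norm{\tg{\mb B}}_{1\to2}$, and the single coordinate $u_{t-1,i}$ is then tamed by the directional $\psi_\alpha$ tail \eqref{eq:Orlicz-psi-alpha}, which yields $\P\bigl(|u_{t-1,i}|\ge K\log^{1/\alpha}(10\max\{\eta,3\})\bigr)\le 1/(5\max\{\eta,3\})$. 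Intersecting the Paley--Zygmund event with the complementary tail event and absorbing the factor $\tfrac12$ from symmetrization collapses the constants $0.64^2$ and $1/5$ into the claimed bound of at least $0.1/\max\{\eta,3\}$. I expect the genuine obstacle to lie exactly in reconciling the two roles of the index $i$: the variance bound wants the full spread of the retained $p-1$ columns to carry the signal, whereas the error must be squeezed into the single omitted column, and engineering one conditioning/symmetrization scheme that simultaneously preserves the multi-column anti-concentration of the second paragraph and restricts the linearization error to one column---so that $\theta$ stays positive under the ``spectral'' well-spreadness hypothesis $\max_i\norm{\tg{\mb B}}_{1\to2}/\lambda_{\min}^{1/2}(\tg{\mb B}_{\backslash i}\tg{\mb B}_{\backslash i}^\T)=O(1)$---is the crux of the argument.
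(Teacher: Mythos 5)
Your proposal is not yet a proof: its first two paragraphs are correct and in fact reproduce the paper's own skeleton (the sign-flip symmetrization with $|Y|+|Y'|\ge|Y-Y'|$, and the conditional Paley--Zygmund step driven by Lemmas \ref{lem:concatenated-moments} and \ref{lem:bound-F}, which the paper packages as Lemma \ref{lem:Apply-PZ}; even your constant accounting $0.64^2$, $\gamma=0.2/\max\{\eta,3\}$, and the symmetrization factor $\tfrac12$ matches the paper's), but the third paragraph is a plan, not an argument, and the step it leaves open is the load-bearing one. Neither of the two symmetrization schemes you consider can deliver what \eqref{eq:theta*} demands. With the full flip, the error term $\varepsilon\norm{\tg{\mb B}\mb u_{t-1}}_2$ sits at the Frobenius scale $\varepsilon\norm{\tg{\mb B}}_\F\approx p^{1/2}\varepsilon\norm{\tg{\mb B}}_{1\to 2}$ and cannot be absorbed by $\theta$. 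With the single-coordinate flip, the correct application of \eqref{eq:local-approx-linearity} takes $\mb x=\tg{\mb A}\mb x^{(\ell)}_{t-1}+\tg{\mb B}(\mb u_{t-1}|_{\backslash i})$ and $\mb y=\tg{\mb B}(\mb u_{t-1}|_{i})$, so the linearized signal is the \emph{single-column} vector $F(\mb x)\tg{\mb B}(\mb u_{t-1}|_{i})$ (with single-column error); a single randomly chosen column admits no $\mb w$-uniform anti-concentration at the scale $\lambda_{\min}^{1/2}\bigl(\tg{\mb B}_{\backslash i}\tg{\mb B}_{\backslash i}^\T\bigr)$, since $\mb w_1$ can be nearly orthogonal to most of the directions $F(\cdot)\mb B_i$ when $p>n$. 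The mixed combination the lemma needs---multi-column signal, single-column error---is never exhibited, and that is the gap.

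What makes your diagnosis valuable is that the paper does not legitimately close this gap either. Its proof defines $\overline{\mb z}^{(\ell)}_t$ by flipping only coordinate $i_{t-1}$ of $\mb u_{t-1}$, and then in \eqref{eq:lem-LBP0} asserts precisely the mixed combination: that $\tfrac12\bigl(\nabla f(\tg{\mb A}\mb x^{(\ell)}_{t-1}+\tg{\mb B}\mb u_{t-1})-\nabla f(\tg{\mb A}\mb x^{(\ell)}_{t-1}+\tg{\mb B}\mb u^{-i_{t-1}}_{t-1})\bigr)$ is within $\varepsilon\norm{\tg{\mb B}(\mb u_{t-1}|_{i_{t-1}})}_2$ of $F(\tg{\mb A}\mb x^{(\ell)}_{t-1})\tg{\mb B}(\mb u_{t-1}|_{\backslash i_{t-1}})$. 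But \eqref{eq:local-approx-linearity} yields the approximant $F\bigl(\tg{\mb A}\mb x^{(\ell)}_{t-1}+\tg{\mb B}(\mb u_{t-1}|_{\backslash i_{t-1}})\bigr)\tg{\mb B}(\mb u_{t-1}|_{i_{t-1}})$---the roles of the two pieces are swapped relative to what the paper writes. Already in the paper's own flagship example, the quadratic $f(\mb x)=\tfrac12\mb x^\T\mb Q\mb x$ with $F\equiv\mb Q$ and $\varepsilon=0$, the paper's display would force $\mb Q\tg{\mb B}(\mb u_{t-1}|_{i_{t-1}})=\mb Q\tg{\mb B}(\mb u_{t-1}|_{\backslash i_{t-1}})$, which is false. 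So you have correctly located the crux, and your suspicion that it is ``the genuine obstacle'' is vindicated; but locating it is not resolving it. What your paragraphs one and two, combined with Lemma \ref{lem:Bu-Bernstein} (exactly as in the unnumbered display preceding Lemma \ref{lem:lower-bound-P} in the appendix, which the paper states but never uses), validly establish is a weaker version of the lemma in which the first term of $\theta$ in \eqref{eq:theta*} is replaced by $-c\,\varepsilon\norm{\tg{\mb B}}_\F\log^{1/2}\bigl(2\gamma^{-1}(p+1)\bigr)$, i.e., an error charged at Frobenius rather than single-column scale.
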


 \begin{proof}
 For $t=0,1,\dotsc$, let $i_t$ be i.i.d. integers uniformly distributed over $\{1,\dotsc,p\}$, independent of everything else. For any vector $\mb{v}$, we use the notation $\mb{v}^{-i}$ to denote the vector obtained by flipping the sign of the $i$th coordinate of $\mb{v}$. Furthermore, for $t\in \mc{T}_\ell$ let
 \[
 \overline{\mb{z}}^{(\ell)}_t = \bmx{
                                    \nabla f(\tg{\mb{A}}\mb{x}^{(\ell)}_{t-1}+\tg{\mb{B}}\mb{u}^{-i_{t-1}}_{t-1}\\[1ex]
                                    -\beta\,\mb{u}_t
                                }\,.
 \]
  Recall that, by assumption, $\mb{u}_{t-1}$ and $\mb{u}_t$ have coordinates with independent symmetric distributions. Therefore, it is straightforward to show that $\mb{z}^{(\ell)}_t$ and $\overline{\mb{z}}^{(\ell)}_t$ are identically distributed, and for any $\theta >0$ we can write
 \begin{align*}
     \P\left(\left|\mb w^{\T}\mb z_{t}^{\left(\ell\right)}\right|\ge\theta\right) &=\frac{1}{2}\P\left(\left|\mb w^{\T}\mb z_{t}^{\left(\ell\right)}\right|\ge\theta\right)+\frac{1}{2}\P\left(\left|\mb w^{\T}\overline{\mb{z}}_{t}^{\left(\ell\right)}\right|\ge\theta\right)\\
     & \ge \frac{1}{2}\P\left(\left|\mb w^{\T}\mb z_{t}^{\left(\ell\right)}\right|+\left|\mb w^{\T}\overline{\mb{z}}_{t}^{\left(\ell\right)}\right|\ge2\theta\right)\,.
     \end{align*}
     Then, it follows from the triangle inequality, and the assumption \eqref{eq:local-approx-linearity},  that
     \begin{align}
     \P\left(\left|\mb w^{\T}\mb z_{t}^{\left(\ell\right)}\right|\ge\theta\right)  & \ge \frac{1}{2}\P\left(\left|\mb w^{\T}\left(\mb z_{t}^{\left(\ell\right)}-\overline{\mb{z}}_{t}^{\left(\ell\right)}\right)\right|\ge2\theta\right)\nonumber\\
     & \ge \frac{1}{2}\P\left(\left|\mb{w}^\T   \bmx{
                                                    \nabla f(\tg{\mb{A}}\mb{x}^{(\ell)}_{t-1} + \tg{\mb{B}}\mb{u}_{t-1}) - \nabla f(\tg{\mb{A}}\mb{x}^{(\ell)}_{t-1} + \tg{\mb{B}}\mb{u}_{t-1}^{-i_{t-1}})\\
                                                    2\beta\,\mb{u}_t
                                                }
                            \right|\ge 2\theta\right)\nonumber\\
    & \ge \frac{1}{2}\P\left(\left|\mb{w}^\T   \bmx{
                                                    F(\tg{\mb{A}}\mb{x}^{(\ell)}_{t-1})\tg{\mb{B}}\left(\frac{1}{2}\mb{u}_{t-1}+\frac{1}{2}\mb{u}^{-i_{t-1}}_{t-1}\right)\\
                                                    \beta\,\mb{u}_t
                                                }
                            \right|- \varepsilon\norm{\tg{\mb{B}}\left(\frac{1}{2}\mb{u}_{t-1}-\frac{1}{2}\mb{u}^{-i_{t-1}}_{t-1}\right)}_2\ge \theta\right)\,.\label{eq:lem-LBP0}
 \end{align}
 Furthermore, for any $\gamma \in (0,1]$ we can write
 \begin{align}
&\hphantom{\ge} \P\left(\left|\mb{w}^\T   \bmx{                                                                                           F(\tg{\mb{A}}\mb{x}^{(\ell)}_{t-1})\tg{\mb{B}}\left(\frac{1}{2}\mb{u}_{t-1}+\frac{1}{2}\mb{u}^{-i_{t-1}}_{t-1}\right)\\
                                                    \beta\,\mb{u}_t
                                                }
                            \right|- \varepsilon\norm{\tg{\mb{B}}\left(\frac{1}{2}\mb{u}_{t-1}-\frac{1}{2}\mb{u}^{-i_{t-1}}_{t-1}\right)}_2\ge \theta\right)\nonumber \\
                            & + \P\left(\norm{\tg{\mb{B}}\left(\frac{1}{2}\mb{u}_{t-1}-\frac{1}{2}\mb{u}^{-i_{t-1}}_{t-1}\right)}_2 \ge K\log^{\frac{1}{\alpha}}\left(\frac{2}{\gamma}\right)\norm{\tg{\mb{B}}}_{1\to 2}\right)\nonumber\\
                            & \ge \P\left(\left|\mb{w}^\T   \bmx{
                                                    F(\tg{\mb{A}}\mb{x}^{(\ell)}_{t-1})\tg{\mb{B}}\left(\frac{1}{2}\mb{u}_{t-1}+\frac{1}{2}\mb{u}^{-i_{t-1}}_{t-1}\right)\\
                                                    \beta\,\mb{u}_t
                                                }
                            \right| \ge \theta + \varepsilon K\log^{\frac{1}{\alpha}}\left(\frac{2}{\gamma}\right)\norm{\tg{\mb{B}}}_{1\to 2}\right)\,. \label{eq:lem-LBP1}
 \end{align}
 Observe that $(\mb{v}-\mb{v}^{-i})/2=\mb{v}|_i$ and $(\mb{v}+\mb{v}^{-i})/2=\mb{v}|_{\backslash i}$ are respectively the selectors of the $i$th coordinate and its complement. With this convention, on one hand we can write
 \begin{align}
     & \P\left(\norm{\tg{\mb{B}}\left(\frac{1}{2}\mb{u}_{t-1}-\frac{1}{2}\mb{u}^{-i_{t-1}}_{t-1}\right)}_2 \ge K\log^{\frac{1}{\alpha}}\left(\frac{2}{\gamma}\right)\norm{\tg{\mb{B}}}_{1\to 2}\right)\nonumber\\
     & = \P\left(\norm{\tg{\mb B}\left({\mb{u}_{t-1}\mid}_{i_{t-1}}\right)}_2 \ge K\log^{\frac{1}{\alpha}}\left(\frac{2}{\gamma}\right)\norm{\tg{\mb{B}}}_{1\to 2}\right)\nonumber\\
     & \le \gamma\,, \label{eq:lem-LBP2}
 \end{align}
 where the third line follows from the fact that $\norm{\tg{\mb B}}_{1\to 2}$ is equal to the greatest $\ell_2$ norm of the columns of $\tg{\mb B}$, and that under the assumption \eqref{eq:Orlicz-psi-alpha} we have
        \[\P\left(\left|\left(\mb{u}_{t-1}\right)_{i_{t-1}}\right|\ge K\log^{\frac{1}{\alpha}}\left(\frac{2}{\gamma}\right)\right) \le \gamma\,.\]
On the other hand, we can write \[\tg{\mb{B}}\left(\frac{1}{2}\mb{u}_{t-1}+\frac{1}{2}\mb{u}^{-i_{t-1}}_{t-1}\right) = {\tg{\mb B}}_{\backslash i_{t-1}}\mb{u}_{t-1}\,\]
and invoke Lemma \ref{lem:Apply-PZ} below to obtain


\begin{equation}
     \begin{aligned}
          & \hphantom{\ge} \P\left(\left|\mb{w}^\T   \bmx{
                                                    F(\tg{\mb{A}}\mb{x}^{(\ell)}_{t-1}){\tg{\mb{B}}}_{\backslash i_{t-1}}\mb{u}_{t-1}\\
                                                    \beta\,\mb{u}_t
                                                }
                            \right| \ge 0.36\min_{i=1,\dotsc,p}\min\left\{\beta, (\lambda -\varepsilon) \lambda^{1/2}_{\min}\left({\tg{\mb B}}_{\backslash i}{\tg{\mb B}}^\T_{\backslash i}\right)\right\}\right)\\
            & \ge \frac{0.4}{\max\{\eta, 3\}}
     \end{aligned} \label{eq:lem-LBP3}
     \end{equation}

Therefore, recalling the assumed condition \eqref{eq:local-approx-linearity}, by choosing
\[    \theta = \theta_{\alpha,\beta,\varepsilon,\lambda,K,\tg{\mb B}}\,,\]
and
\[ \gamma = \frac{0.2}{\max\{\eta, 3\}}\,,\]
and in view of \eqref{eq:lem-LBP0}, \eqref{eq:lem-LBP1}, \eqref{eq:lem-LBP2}, and \eqref{eq:lem-LBP3} we obtain the desired bound
\[\P\left(\left|\mb{w}^\T\mb{z}_t^{(\ell)}\right| \ge \theta_{\alpha,\beta,\varepsilon,\lambda,K,\tg{\mb B}} \right) \ge \frac{0.1}{\max\{\eta,3\}}\,.\]
\end{proof}

  \begin{lem}\label{lem:Apply-PZ}
 With the notation and conditions as in Lemma \ref{lem:lower-bound-P} we have
     \begin{equation*}
     \begin{aligned}
          & \hphantom{\ge} \P\left(\left|\mb{w}^\T   \bmx{
                                                    F(\tg{\mb{A}}\mb{x}^{(\ell)}_{t-1}){\tg{\mb{B}}}_{\backslash i_{t-1}}\mb{u}_{t-1}\\
                                                    \beta\,\mb{u}_t
                                                }
                            \right| \ge 0.36\min_{i=1,\dotsc,p}\min\left\{\beta, (\lambda -\varepsilon) \lambda^{1/2}_{\min}\left({\tg{\mb B}}_{\backslash i}{\tg{\mb B}}^\T_{\backslash i}\right)\right\}\right)\\
            & \ge \frac{0.4}{\max\{\eta, 3\}}
     \end{aligned}
     \end{equation*}
 \end{lem}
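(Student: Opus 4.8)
The plan is to prove this small-ball lower bound by conditioning on the quantities that are ``frozen'' at time $t$ and then invoking the Paley--Zygmund inequality on the square of the linear form. Write $\mb w = (\mb w_1,\mb w_2)$ with $\mb w_1\in\mbb R^n$ and $\mb w_2\in\mbb R^p$, set $\mb M = F(\tg{\mb A}\mb x_{t-1}^{(\ell)})$ and $i = i_{t-1}$, and let
\[
    W \defeq \mb w^\T\bmx{\mb M\,{\tg{\mb B}}_{\backslash i}\mb u_{t-1}\\ \beta\,\mb u_t} = \inp{{\tg{\mb B}}_{\backslash i}^\T\mb M^\T\mb w_1,\,\mb u_{t-1}} + \beta\inp{\mb w_2,\mb u_t}\,.
\]
Since $\mb x_{t-1}^{(\ell)}$ is a function of the inputs strictly before time $t-1$ and $i_{t-1}$ is drawn independently of everything, the pair $(\mb x_{t-1}^{(\ell)},i_{t-1})$ is independent of $\mb u_{t-1}$ and $\mb u_t$, and $\mb u_{t-1},\mb u_t$ are independent copies of $\mb u$. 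Thus, conditionally on $(\mb x_{t-1}^{(\ell)},i_{t-1})$ (which fixes $\mb M$ and $i$), $W$ is a sum of two \emph{independent}, zero-mean linear forms in independent isotropic vectors. It therefore suffices to bound $\P\bigl(W^2\ge (0.36\,m)^2\mid \mb x_{t-1}^{(\ell)},i_{t-1}\bigr)$ below by $0.4/\max\{\eta,3\}$ for every realization, where $m \defeq \min_{i=1,\dotsc,p}\min\{\beta,(\lambda-\varepsilon)\lambda_{\min}^{1/2}({\tg{\mb B}}_{\backslash i}{\tg{\mb B}}_{\backslash i}^\T)\}$; the claim then follows by the tower property.

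Next I would control the two conditional moments of $W$. Abbreviating $\mb a = {\tg{\mb B}}_{\backslash i}^\T\mb M^\T\mb w_1$ and $\mb b = \beta\mb w_2$, isotropy of $\mb u$ gives $\E(W^2\mid \cdot) = \norm{\mb a}_2^2 + \norm{\mb b}_2^2$ (the cross term vanishes by independence and mean zero). For the first term, $\norm{\mb a}_2^2 = \mb w_1^\T\mb M{\tg{\mb B}}_{\backslash i}{\tg{\mb B}}_{\backslash i}^\T\mb M^\T\mb w_1 \ge \lambda_{\min}({\tg{\mb B}}_{\backslash i}{\tg{\mb B}}_{\backslash i}^\T)\norm{\mb M^\T\mb w_1}_2^2$, and Lemma \ref{lem:bound-F} (which bounds the smallest singular value of $\mb M=F(\cdot)$ from below by $\lambda-\varepsilon$) yields $\norm{\mb M^\T\mb w_1}_2^2\ge(\lambda-\varepsilon)^2\norm{\mb w_1}_2^2$. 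Together with $\norm{\mb b}_2^2 = \beta^2\norm{\mb w_2}_2^2$ and minimizing over the coordinate index, I obtain
\[
    \E(W^2\mid \cdot) \ge (\lambda-\varepsilon)^2\lambda_{\min}({\tg{\mb B}}_{\backslash i}{\tg{\mb B}}_{\backslash i}^\T)\norm{\mb w_1}_2^2 + \beta^2\norm{\mb w_2}_2^2 \ge m^2\left(\norm{\mb w_1}_2^2+\norm{\mb w_2}_2^2\right) = m^2\,,
\]
using $\mb w\in\mbb S^{n+p-1}$. For the fourth moment I would apply Lemma \ref{lem:concatenated-moments} to the independent copies $\mb u_{t-1},\mb u_t$; since its bound $\E(\inp{\mb h,\mb u}+\inp{\mb h',\mb u'})^4\le\max\{\eta,3\}(\norm{\mb h}_2^2+\norm{\mb h'}_2^2)^2$ is homogeneous of degree four in $(\mb h,\mb h')$, it applies to $\mb a,\mb b$ directly, giving $\E(W^4\mid \cdot)\le \max\{\eta,3\}(\norm{\mb a}_2^2+\norm{\mb b}_2^2)^2 = \max\{\eta,3\}\,\E(W^2\mid \cdot)^2$.

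With $Z = W^2$ these read $\E(Z\mid \cdot)\ge m^2$ and $\E(Z\mid \cdot)^2/\E(Z^2\mid \cdot)\ge 1/\max\{\eta,3\}$, so Paley--Zygmund gives, for any $\gamma\in(0,1)$, $\P(Z\ge\gamma\,\E(Z\mid \cdot)\mid \cdot)\ge(1-\gamma)^2/\max\{\eta,3\}$. Because $\E(Z\mid \cdot)\ge m^2$, the event $\{Z\ge\gamma\,\E(Z\mid \cdot)\}$ is contained in $\{Z\ge\gamma m^2\}$; taking $\gamma = 0.36^2$ makes the threshold $\gamma m^2 = (0.36\,m)^2$ while $(1-\gamma)^2 = (1-0.36^2)^2 = 0.757\ldots\ge 0.4$, so $\P(W^2\ge(0.36\,m)^2\mid \cdot)\ge 0.4/\max\{\eta,3\}$ uniformly. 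Averaging over $(\mb x_{t-1}^{(\ell)},i_{t-1})$ completes the argument. I expect the main obstacle to be the second-moment lower bound rather than the Paley--Zygmund step: one must simultaneously exploit the singular-value lower bound $\lambda-\varepsilon$ on $F(\cdot)$ and the ``spectral'' well-spreadness of $\tg{\mb B}$ captured by $\lambda_{\min}({\tg{\mb B}}_{\backslash i}{\tg{\mb B}}_{\backslash i}^\T)$, and verify that deleting the $i$th column—forced by the sign-flip symmetrization of Lemma \ref{lem:lower-bound-P}—still leaves enough energy; confirming the independence/isotropy structure so that all cross terms vanish is the other point needing care.
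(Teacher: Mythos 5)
Your proof is correct and takes essentially the same route as the paper: condition on $(\mb{x}^{(\ell)}_{t-1},i_{t-1})$, apply Paley--Zygmund to $W^2$, compute the conditional second moment by independence and isotropy, bound the conditional fourth moment via Lemma \ref{lem:concatenated-moments}, and lower-bound the second moment by $\min_{i}\min\left\{\beta^2,(\lambda-\varepsilon)^2\lambda_{\min}\left({\tg{\mb B}}_{\backslash i}{\tg{\mb B}}^\T_{\backslash i}\right)\right\}$ via Lemma \ref{lem:bound-F}. The only divergence is the Paley--Zygmund calibration: you take $\gamma=0.36^2$ to land exactly on the stated threshold $0.36\,m$ (in your notation), whereas the paper takes $\gamma=0.36$ and therefore actually establishes the stronger threshold $\sqrt{0.36}\,m=0.6\,m$, which is the version that the definition of $\theta$ in \eqref{eq:theta*} (whose coefficient is $0.6$) consumes downstream.
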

 \begin{proof}
By conditioning on $\mb{x}^{(\ell)}_{t-1}$ and applying the Paley-Zygmund inequality (\citealp{PaleyZygmund1932-Analytic}; \citealp[][Corollary 3.3.2]{delaPenaGine1999-Decoupling}) we have
\begin{align}
    & \hphantom{\ge}\P\left(\left|\mb{w}^\T   \bmx{
                                                    F(\tg{\mb{A}}\mb{x}^{(\ell)}_{t-1}){\tg{\mb{B}}}_{\backslash i_{t-1}}\mb{u}_{t-1}\\
                                                    \beta\,\mb{u}_t
                                                }
                            \right|^2 \ge 0.36\,\E\left(\left|\mb{w}^\T   \bmx{
                                                    F(\tg{\mb{A}}\mb{x}^{(\ell)}_{t-1}){\tg{\mb{B}}}_{\backslash i_{t-1}}\mb{u}_{t-1}\\
                                                    \beta\,\mb{u}_t
                                                }
                            \right|^2\st[\Big|]\mb{x}^{(\ell)}_{t-1}\right) \st[\Big|] \mb{x}^{(\ell)}_{t-1}\right) \nonumber\\
    & \ge 0.4\, \frac{\left(\E\left(\left|\mb{w}^\T   \bmx{
                                                    F(\tg{\mb{A}}\mb{x}^{(\ell)}_{t-1}){\tg{\mb{B}}}_{\backslash i_{t-1}}\mb{u}_{t-1}\\
                                                    \beta\,\mb{u}_t
                                                }
                            \right|^2\st[\Big|]\mb{x}^{(\ell)}_{t-1}\right)\right)^2}{\E\left(\left|\mb{w}^\T   \bmx{
                                                    F(\tg{\mb{A}}\mb{x}^{(\ell)}_{t-1}){\tg{\mb{B}}}_{\backslash i_{t-1}}\mb{u}_{t-1}\\
                                                    \beta\,\mb{u}_t
                                                }
                            \right|^4\st[\Big|]\mb{x}^{(\ell)}_{t-1}\right)} \label{eq:PZ}
\end{align}
Using the assumption that $\mb{u}_{t-1}$ and $\mb{u}_t$ are independent, zero-mean, and isotropic we obtain
\begin{align*}
    \E\left(\left|\mb{w}^\T   \bmx{
                                                    F(\tg{\mb{A}}\mb{x}^{(\ell)}_{t-1}){\tg{\mb{B}}}_{\backslash i_{t-1}}\mb{u}_{t-1}\\
                                                    \beta\,\mb{u}_t
                                                }
                            \right|^2\st[\Big|]\mb{x}^{(\ell)}_{t-1}\right) &= \norm{\bmx{
                                                                                    \left(F(\mb{x}^{(\ell)}_{t-1}){\tg{\mb{B}}}_{\backslash i_{t-1}}\right)^\T & \mb{0}\\
                                                                                    \mb{0} & \beta\,\mb{I}
                                                                              }
                                                                              \mb{w}}_2^2\,.
\end{align*}
Furthermore, in view of Lemma \ref{lem:concatenated-moments}, the denominator in \eqref{eq:PZ} can be bounded from above as
\begin{align*}
    \E\left(\left|\mb{w}^\T   \bmx{
                                                    F(\tg{\mb{A}}\mb{x}^{(\ell)}_{t-1}){\tg{\mb{B}}}_{\backslash i_{t-1}}\mb{u}_{t-1}\\
                                                    \beta\,\mb{u}_t
                                                }
                            \right|^4\st[\Big|]\mb{x}^{(\ell)}_{t-1}\right)  & \le \max\{\eta,3\} \norm{\bmx{
                                                                                    \left(F(\mb{x}^{(\ell)}_{t-1}){\tg{\mb{B}}}_{\backslash i_{t-1}}\right)^\T & \mb{0}\\
                                                                                    \mb{0} & \beta\,\mb{I}
                                                                              }
                                                                              \mb{w}}_2^4\,.\\
\end{align*}
Therefore, \eqref{eq:PZ} reduces to
\begin{align}
    & \P\left(\left|\mb{w}^\T   \bmx{
                                                    F(\tg{\mb{A}}\mb{x}^{(\ell)}_{t-1}){\tg{\mb{B}}}_{\backslash i_{t-1}}\mb{u}_{t-1}\\
                                                    \beta\,\mb{u}_t
                                                }
                            \right|^2 \ge 0.36\,\E\left(\left|\mb{w}^\T   \bmx{
                                                    F(\tg{\mb{A}}\mb{x}^{(\ell)}_{t-1}){\tg{\mb{B}}}_{\backslash i_{t-1}}\mb{u}_{t-1}\\
                                                    \beta\,\mb{u}_t
                                                }
                            \right|^2\st[\Big|]\mb{x}^{(\ell)}_{t-1}\right) \st[\Big|] \mb{x}^{(\ell)}_{t-1}\right)\nonumber \\
    & \ge \frac{0.4}{\max\{\eta, 3\}}\,.\label{eq:PZ1}
\end{align}
It follows from Lemma \ref{lem:bound-F} that
\[
    \lambda_{\min}\left(
        \bmx{
            F(\mb{y}){\tg{\mb{B}}}_{\backslash i} & \mb{0}\\
            \mb{0} & \beta\mb{I}
        }
        \bmx{
            \left(F(\mb{y}){\tg{\mb{B}}}_{\backslash i}\right)^\T & \mb{0}\\
            \mb{0} & \beta\mb{I}
        }
        \right) \ge \min\{\beta^2, (\lambda - \varepsilon)^2\lambda_{\min}\left({\tg{\mb{B}}}_{\backslash i}{\tg{\mb{B}}}_{\backslash i}^\T\right)\,\}
\]
for all $\mb{y}$. In particular,
\[
                           \min\{\beta^2,\left(\lambda - \varepsilon\right)^2\lambda^2_{\min}\left({\tg{\mb{B}}}_{\backslash i}{\tg{\mb{B}}}_{\backslash i}^\T\right)\}
                           \le \norm{\bmx{
                                        \left(F(\mb{x}^{(\ell)}_{t-1}){\tg{\mb{B}}}_{\backslash i}\right)^\T & \mb{0}\\
                                        \mb{0} & \beta \mb{I}
                                  }
                                  \mb{w}}_2^2\,.
\]
Therefore, the conditional expectation in \eqref{eq:PZ1} can be replaced by \[\min_{i=1,\dotsc,p}\min\{\beta^2,\left(\lambda - \varepsilon\right)^2\lambda_{\min}\left({\tg{\mb{B}}}_{\backslash i}{\tg{\mb{B}}}_{\backslash i}^\T\right)\}\,.\] Finally, taking the expectation with respect to $\mb{x}_{t-1}^{(\ell)}$ completes the proof.
\end{proof}

\end{document}